\theoremstyle{plain}
\newtheorem{theorem}{Theorem}[section]
\newtheorem{corollary}[theorem]{Corollary}
\theoremstyle{definition}
\newtheorem{definition}[theorem]{Definition}
\newtheorem{assumption}[theorem]{Assumption}
\newtheorem{conjecture}[theorem]{Conjecture}
\theoremstyle{remark}
\acrodef{ML}{machine learning}
\acrodef{SUWR}{sequential unmasking without reversion}
\acrodef{MSE}{mean squared error}
\acrodef{CAE}{concrete autoencoder}
\newcommand{\mpara}[1]{\noindent{\bf #1}}
\newcommand\mydots{\makebox[0.8em][c]{.\hfil.\hfil.}}
\icmltitlerunning{Local Feature Selection without Label or Feature Leakage for Interpretable Machine Learning Predictions}
\begin{document}

\twocolumn[
\icmltitle{Local Feature Selection without Label or Feature Leakage  for~Interpretable~Machine~Learning~Predictions}

\icmlsetsymbol{equal}{*}

\begin{icmlauthorlist}
\icmlauthor{Harrie Oosterhuis}{equal,radboud}
\icmlauthor{Lijun Lyu}{equal,delft}
\icmlauthor{Avishek Anand}{delft}
\end{icmlauthorlist}

\icmlaffiliation{radboud}{Radboud University, Nijmegen, The Netherlands}
\icmlaffiliation{delft}{TU Delft, Delft, The Netherlands}

\icmlcorrespondingauthor{Harrie Oosterhuis}{harrie.oosterhuis@ru.nl}

\icmlkeywords{Machine Learning, ICML}

\vskip 0.3in
]

\printAffiliationsAndNotice{\icmlEqualContribution}  %

\begin{abstract}
Local feature selection in machine learning provides instance-specific explanations by focusing on the most relevant features for each prediction, enhancing the interpretability of complex models. 
However, such methods tend to produce misleading explanations by encoding additional information in their selections.
In this work, we attribute the problem of misleading selections by formalizing the concepts of label and feature leakage. 
We rigorously derive the necessary and sufficient conditions under which we can guarantee no leakage, and show existing methods do not meet these conditions.
Furthermore, we propose the first local feature selection method that is proven to have no leakage called SUWR.
Our experimental results indicate that SUWR is less prone to overfitting and combines state-of-the-art predictive performance with high feature-selection sparsity.
Our generic and easily extendable formal approach provides a strong theoretical basis for future work on interpretability with reliable explanations.

\end{abstract}

\section{Introduction}

Feature attributions and feature selections in interpretable machine learning (ML) help users understand how much each input feature influences the output of the model~\cite{du2019techniques, molnar2020interpretable}.
One prominent family of methods are designed for local feature selection, a.k.a.\ instance-wise feature selection, for interpretable ML~\cite{rationale-survey}.
These approaches aim to only select the most-important features per instance and to exclude the rest during inference~\cite{li2017feature}, thereby making the predictions by the model easier to interpret.

Let $i$ refer to an instance in a dataset with $x_i \in \mathbb{R}^d$ as its $d$-dimensional feature vector representation and $y_i$ as its accompanying label to be predicted.
A feature selector $\zeta$ takes $x_i$ as input and outputs a feature mask $h_i \in \{0,1\}^d$, either through a stochastic or deterministic process: $h_i \sim \zeta(x_i)$.
Let $x_i \odot h_i$ indicate the masked features that results from applying $h_i$ to $x_i$, where all non-selected features are masked.
We denote a masked feature with $\emptyset$, to clearly differentiate it from a zero value, and the $j$th element in a vector with $[j]$:
\begin{equation}
    (x_i \odot h_i)[j]  \coloneqq
    \begin{cases}
        x_i[j] & \text{if } h_i[j] = 1, \\
        \emptyset & \text{if } h_i[j] = 0.
    \end{cases}
\end{equation}
here $\zeta$ is a local selector and  produces a different mask for each instance.
We note that local feature selection differs from global feature selection which reveals feature importance on the dataset level, as it has a fixed mask for all instances~\cite{cae,lemhadri2021lassonet, yamada2020feature,lee2021self}.
By being able to vary masks, local methods are more flexible and can give more in-depth insight into the importance of features in individual instances~\cite{invase,DBLP:conf/aaai/tabnet}.

A widely used setup for local feature selection is to follow a selector-predictor architecture that is typically jointly optimized~\cite{invase, DBLP:conf/aistats/explanation-encode-pred,DBLP:conf/aaai/tabnet}.
More precisely, let $f$ be the predictor model that can take masked features as input: $f(x \odot h)$, importantly, $h$ can be inferred exactly from $x \odot h$.
The optimization of the selector $\zeta$ and predictor $f$ is usually based on a linear combination of a prediction loss $L$ and the sparsity of a mask  $\lVert h \rVert$ to enforce high sparsity (and hence interpretability).
For a dataset of $N$ instances, we use:
\begin{equation}
    \mathcal{L}(\zeta,\! f) \!\!\coloneqq\!\! \frac{1}{N}\!\sum_{i=1}^N \mathbb{E}_{h_i \sim \zeta(x_i)}\!\big[
    L(f( x_i \odot h_i), y_i) + \lambda \lVert h_i \rVert \big],
    \label{eq:genericloss}
\end{equation}
where the parameter $\lambda \in \mathbb{R}_{>0}$ balances feature sparsity against predictive performance.
The loss thus incentivizes the exclusion of features that do not contribute to high predictive performance, consequently, the selector should learn only to select the features that are the most important for accurate predictions.

Whilst the reasoning behind the local feature selection approach appears intuitive, previous work has found a \textit{fundamental flaw}: local methods can choose features that provide high predictive performance but clearly are \emph{unfaithful} explanations of feature importance~\citep{interpret-social-attribution}.
\citet{DBLP:conf/aistats/explanation-encode-pred} discuss a selector and predictor combination for digit classification where a single pixel is selected per image, yet optimal accuracy prediction is maintained.
Instead of selecting features based on importance, their selector learned to encode a prediction of the digit in the selection mask $h$.
Because they are optimized jointly, the predictor also learned the relation between the encoding and the original prediction.
In other words, instead of selecting the most important features, the behavior of the selector was aimed at passing as much information about the corresponding label as possible.
The resulting selections thus provide misleading explanations that give false insights into the prediction process.
As a remedy, \citet{DBLP:conf/aistats/explanation-encode-pred} add noise to the selection mask $h$; whilst this appears to improve the situation, it does not address the underlying problem.
To the best of our knowledge, no existing local feature selection method can guarantee that their selections never  provide misleading explanations by encoding  additional information.

In this paper, we provide the first formal approach to the issue of additional information being encoded in local feature selections.
We name this problem \emph{leakage} and define it using two novel formal concepts: \emph{label leakage}, where information about the label is encoded in a local selection, and \emph{feature leakage}, where information about the values of non-selected features is encoded in a local selection.
Subsequently, we derive the sufficient and necessary properties of a local feature selection method, it appears no existing method meets these criteria.

To address this problem, we propose two methods for optimizing local feature selection policies that are guaranteed to have no leakage.
First, we introduce a novel linear programming method to search for the optimal selection and prediction policy for any desired sparsity and accuracy tradeoff.
This method is highly effective but can only be applied to problems with complete knowledge that are of small scale, which means it has limited practical utility.
Second, we introduce a novel method that is much more practical and widely applicable called \acfi{SUWR}.
\ac{SUWR} selects features over several sequential decision rounds, where each decision is based only on the values of features that were selected in previous rounds and decisions cannot be reversed in subsequent rounds.
We prove that it is impossible for \ac{SUWR} to encode information about non-selected features or any labels, since it never had access to those values when deciding what to select.
Moreover, we conjecture that when the feature distribution fully supports the Cartesian product of possible feature values, \ac{SUWR} is the only solution without leakage, because it captures all possible policies that have no leakage.
Our experimental results indicate that \ac{SUWR} is less prone to overfitting and combines state-of-the-art predictive performance with high feature-selection sparsity.
Furthermore, the sequential decisions of \ac{SUWR} provide a novel way to explain predictions by giving a narrative of how predictions are formed (e.g., Figure~\ref{fig:shoes}), a unique insight that previous methods do not provide.
The \ac{SUWR} method can be applied to various forms of data and types of model architectures and optimization, its approach is generic and easily extendable.

\subsection{Brief related work}

Approaches in interpretable machine learning have been categorized into \textit{explaining trained models in post-hoc manner} ~\citep{lime,saliency,deeplift,shap,fastshap-KL} and \textit{building intrinsically explainable models} \citep{l2x,invase,expred}. 
Local feature selection methods use only a few relevant features to generate each prediction and thus are popular for intrinsical explainability.  
These methods mainly adhere to a selector-predictor architecture, e.g., CAE~\citep{cae}, L2X~\citep{l2x}, INVASE~\citep{invase} and REAL-X~\citep{DBLP:conf/aistats/explanation-encode-pred}; or both are performed within a single model, e.g., TabNet~\citep{DBLP:conf/aaai/tabnet}.
The resulting feature selections are then supposed to serve as explanation of the corresponding predictions.
However, several recent works question this use of feature selections as explanations~\citep{interpret-social-attribution,irrationality-rationales}. 
Specifically, earlier work has found that the joint-training regime can result in high sparsity irrespective of the relevance of the selected features~\citep{DBLP:conf/aistats/explanation-encode-pred}. 
In this paper, we solve this fundamental discrepancy by providing necessary and sufficient conditions that a local model selection method should satisfy to provide faithful explanations.
See Appendix~\ref{appendix:relatedwork} for a more detailed discussion of related work.

\section{Leakage in Feature Selection}
\label{sec:leakage}

This section introduces a formal definition of leakage based on label and feature leakage.
Subsequently, we use them to prove the necessary and sufficient conditions for leakage.

To keep our terminology succinct, we define \emph{leakage} as either \emph{feature leakage} or \emph{label leakage}, thus:%
\footnote{Our definition is different but related to the concept of \emph{data leakage}: the availability of information during optimization that is unavailable during inference~\cite{kaufman2012leakage}.}
\begin{definition}
\label{def:leakage}
A feature selector does not have leakage, if it has neither label leakage (Definition~\ref{def:labelleakage}) nor feature leakage (Definition~\ref{def:featureleakage}).
\end{definition}
Table~\ref{tab:featleak} displays an intuitive example of leakage where a selection policy mask perfectly encodes all information about the label and non-selected features.

\subsection{Formalization of label leakage in feature selection}

Colloquially, we understand label leakage to be the problem where the selection mask $h$ encodes information about the label.
In the context of interpretable \ac{ML}, the purpose of $h$ is to select the features that provide the most salient information.
Therefore,  this purpose is entirely defeated by the injection of additional information about the label in $h$.
This problematic behavior has been observed in previous work~\citep{DBLP:conf/aistats/explanation-encode-pred, interpret-social-attribution}, however, to the best of our knowledge, no one has introduced a formal definition of this issue yet.

In our notation, we denote $s^\text{in}$ as the  set of indices of the selected features (\underline{in}cluded) and $s^\text{ex}$ for the non-selected features (\underline{ex}cluded).
To keep our notation brief, we define:
\begin{definition}
\label{def:omegaset}
$\Omega$ is the set of all possible selections of feature values and label values:
\begin{equation}
\begin{split}
\Omega \coloneqq \{&(x,y,s^\text{in}\!\!,s^\text{ex}) : p(x) > 0 \land p(s^\text{in}\!\!,s^\text{ex} \mid x, \zeta) > 0
 \\
& \land p(x[s^\text{in}],y) > 0 \land s^\text{in} \cup s^\text{ex} = \{1, 2, \mydots, d\}\}. 
\end{split}
\end{equation}
\end{definition}
Our proposed definition of label leakage is based on the idea that the selection $h$ should not be able to provide information about the label.
For a selection, $(x,y,s^\text{in},s^\text{ex})\in\Omega$, the predictive information in this selection can be represented by the \emph{natural} label distribution conditioned on the selected feature values: $p(y \mid x[s^\text{in}])$.
This distribution can be further conditioned the fact that $s^\text{in}$ has been selected by the selector $\zeta$: $p(y \mid x[s^\text{in}], h[s^\text{in}] = 1, h[s^\text{ex}] = 0, \zeta)$.
The key insight in our definition is that when there is no label leakage, these distributions should be equal.
\begin{definition}
\label{def:labelleakage}
A feature selector $\zeta$ does not have label leakage,
if conditioning the label distribution on the selection of features by $\zeta$ does not change the label distribution:
\begin{align}
&\forall (x, y, s^\text{in},s^\text{ex}) \in \Omega,
 \label{eq:labelleakage} \\ & \nonumber\qquad
p(y \mid x[s^\text{in}]) = p(y \mid x[s^\text{in}], h[s^\text{in}] = 1, h[s^\text{ex}] = 0, \zeta).
\end{align}
\end{definition}
In other words, if the knowledge that a feature selection comes from a specific selector $\zeta$ changes the probability of a label, then $\zeta$ has label leakage.
Imagine two masked feature values: $x_1 \odot h_1 \!=\! x_2  \odot h_2$, one made with a uniform random selection, the other with $\zeta$, if predictions are only based on the selected feature values then both should lead to the exact same predictions: $\forall y,\, p(y \mid x_1 \odot h_1) = p(y \mid x_2 \odot h_2, \zeta)$.

\begin{table}[t]
\centering %
\caption{Example of feature and label leakage in feature selection (non-selected features are omitted). %
The label $y$ is the sum of the two independent features, therefore, perfect label prediction should only be possible with both features.
However, each $x\odot h$ value is matched with a single label and set of feature values,
thereby, this solution provides 100\% accuracy in label prediction and feature reconstruction, with a $62.5\%$ feature reduction.
This combination of performance and sparsity is only possible because of leakage.
}
\label{tab:featleak}
 \resizebox{\columnwidth}{!}{

\begin{tabular}{c | c c  | c c  | c c  | c}
\toprule
\!\!$p(x, y, h )$\!\! &
\!$x[1]$\!\! & \!\!$x[2]$\!  &
\!$h[1]$\!\! &  \!\!$h[2]$\!  &
\hspace{-1mm}$(x\odot h)$[1]\hspace{-1.5mm} & 
\hspace{-1.5mm}$(x \odot h)$[2]\hspace{-1mm} &  
y \\
\midrule
   0.25 & 1 & 1 & 1 & 0 & 1 &  & 2 \\
   0.25 & 0 & 1 & 0 & 1 &  & 1 & 1 \\
   0.25 & 1 & 0 & 0 & 1 &  & 0 & 1 \\
   0.25 & 0 & 0 & 0 & 0 &  &  & 0 \\
   \bottomrule
\end{tabular}

 }
 \vspace{-\baselineskip}
\end{table}

\subsection{Formalizing feature leakage in feature selection}
Analogous to label leakage, we also propose the concept of feature leakage where the selection mask $h$ encodes information about non-selected features.
As illustrated in Table~\ref{tab:featleak},
we motivate the prevention of feature leakage with two arguments:
 \begin{enumerate*}[label=(\roman*)]
\item feature leakage defeats the purpose of feature selection as information about the values of non-selected features is not actually excluded; and
\item when there is a correlation between features and labels, a basic assumption in machine learning~\cite{bishop2006pattern}, feature leakage implies label leakage.
\end{enumerate*}
Therefore, it also seems infeasible to prevent label leakage without also tackling feature leakage.
We formally define feature leakage as:
\begin{definition}
\label{def:featureleakage}
A feature selector $\zeta$ does not have feature leakage,
if conditioning the feature distribution on the selection of features by $\zeta$ does not change the feature distribution:
\begin{equation}
\begin{split}
    & \forall (x, y, s^\text{in},s^\text{ex}) \in \Omega, \quad
     p(x[s^\text{ex}] \mid x[s^\text{in}])
     \\ &\qquad\quad
     = p(x[s^\text{ex}] \mid x[s^\text{in}], h[s^\text{in}] = 1, h[s^\text{ex}] = 0, \zeta).
\end{split}
\label{eq:featureleakage}
\end{equation}
\end{definition}
Similar to label leakage, the intuition behind feature leakage is that knowing that a feature selection was made by $\zeta$ should not affect the probability of non-selected feature values.

\subsection{The necessary and sufficient conditions for leakage}
\label{sec:conditions}

From these formal definitions of feature leakage and label leakage, we derive the sufficient and necessary conditions for a feature selector without leakage in Appendix~\ref{section:proofconditions}.
We find the following:
\begin{corollary}
\label{corollary:mainresult}
A feature selector does not have leakage if and only if every probability for every possible feature selection does not depend on any label values or any non-selected feature values:
\begin{align}
     & \forall (x, y, s^\text{in}\!, s^\text{ex}) \in \Omega, \quad
     p( h[s^\text{in}] = 1, h[s^\text{ex}] = 0 \mid x[s^\text{in}], \zeta)
     \nonumber \\ & \qquad\;
      = p( h[s^\text{in}] = 1, h[s^\text{ex}] = 0 \mid x[s^\text{in}], x[s^\text{ex}], y, \zeta).
\end{align}
\end{corollary}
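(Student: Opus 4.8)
The plan is to rewrite the two leakage conditions, via Bayes' rule, as equivalent statements purely about the \emph{selection probability}, and then show that their conjunction collapses to the single identity asserted in the corollary. Throughout, write $E$ for the selection event $\{h[s^\text{in}]=1, h[s^\text{ex}]=0\}$ and keep $\zeta$ as a fixed parameter attached to every probability. I would also make explicit two structural facts that are implicit in the setup and that the argument needs: the selector reads only $x$, so the selection event is conditionally independent of the label given the \emph{full} feature vector, $p(E\mid x,y,\zeta)=p(E\mid x,\zeta)$; and the marginal law of $x$, and of $(x,y)$, does not depend on $\zeta$, e.g.\ $p(y\mid x[s^\text{in}],\zeta)=p(y\mid x[s^\text{in}])$.

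First I would establish two lemmas. \emph{(A)}: a selector has no label leakage iff, for all $(x,y,s^\text{in},s^\text{ex})\in\Omega$, $p(E\mid x[s^\text{in}],y,\zeta)=p(E\mid x[s^\text{in}],\zeta)$; this follows by expanding $p(E,y\mid x[s^\text{in}],\zeta)$ in the two obvious ways, cancelling $p(y\mid x[s^\text{in}],\zeta)=p(y\mid x[s^\text{in}])>0$ (which is positive on $\Omega$), and observing that the equation of Definition~\ref{def:labelleakage} is exactly the assertion that the resulting Bayes ratio equals $1$. \emph{(B)}: symmetrically, a selector has no feature leakage iff $p(E\mid x[s^\text{in}],x[s^\text{ex}],\zeta)=p(E\mid x[s^\text{in}],\zeta)$ on $\Omega$, i.e.\ $p(E\mid x,\zeta)$ depends on $x$ only through the selected coordinates, using $p(x[s^\text{ex}]\mid x[s^\text{in}],\zeta)=p(x[s^\text{ex}]\mid x[s^\text{in}])>0$. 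A little care is needed to check that the denominators appearing in these Bayes manipulations are positive on $\Omega$; for instance $p(E\mid x[s^\text{in}],\zeta)\ge p(E\mid x,\zeta)\,p(x[s^\text{ex}]\mid x[s^\text{in}])>0$ because the whole tuple lies in $\Omega$.

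The two directions of the corollary then follow. For ``$\Leftarrow$'', assume the stated identity $p(E\mid x[s^\text{in}],\zeta)=p(E\mid x[s^\text{in}],x[s^\text{ex}],y,\zeta)$: summing it over $y$ against $p(y\mid x[s^\text{in}],x[s^\text{ex}])$ yields (B), and summing over $x[s^\text{ex}]$ against $p(x[s^\text{ex}]\mid x[s^\text{in}])$ yields (A), so by the lemmas there is neither feature nor label leakage, hence no leakage by Definition~\ref{def:leakage} (one checks that every $y$, resp.\ $x[s^\text{ex}]$, carrying positive weight still forms a tuple in $\Omega$ with the remaining coordinates). For ``$\Rightarrow$'', assume no leakage; then (B) gives $p(E\mid x,\zeta)=p(E\mid x[s^\text{in}],\zeta)$ and the structural fact gives $p(E\mid x,y,\zeta)=p(E\mid x,\zeta)$, and chaining these two equalities is precisely the corollary's identity. (Notably this direction uses only the absence of feature leakage; under the structural assumption, no feature leakage already implies no label leakage, echoing the paper's point that the two forms of leakage are closely tied.)

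The crux is the ``$\Rightarrow$'' direction. It is tempting to try to derive the joint statement $E\perp(y,x[s^\text{ex}])\mid x[s^\text{in}]$ from the two marginal ones — no label leakage ($E\perp y\mid x[s^\text{in}]$) and no feature leakage ($E\perp x[s^\text{ex}]\mid x[s^\text{in}]$) — but that composition of conditional independences is false for general joint distributions; a selection event of the form $\{x[s^\text{ex}]=y\}$ would satisfy both leakage definitions while violating the conclusion. What makes the implication go through is the extra structural input that the selector never observes $y$, which gives $E\perp y\mid x$ conditioned on the full $x$ rather than merely on $x[s^\text{in}]$. I would therefore promote this property of the data-generating process to an explicit assumption instead of leaving it tacit.
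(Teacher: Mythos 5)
Your proposal is correct, and its two lemmas (A) and (B) are precisely the paper's Theorems~\ref{theorem:labelleakage} and~\ref{theorem:featleakage}, proved by the same Bayes-rule manipulation under the same independence assumptions (Assumptions~\ref{ass:sellabelind} and~\ref{ass:selfeatind}); so for the component equivalences you and the paper follow the same route. Where you genuinely diverge is at the final combination step. The paper's proof of the corollary is the single sentence that it ``follows directly'' from the two theorems, which silently passes from the two marginal conditional independences $E\perp y\mid x[s^\text{in}]$ and $E\perp x[s^\text{ex}]\mid x[s^\text{in}]$ to the joint independence $E\perp (y,x[s^\text{ex}])\mid x[s^\text{in}]$ appearing in the corollary's right-hand side. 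As you correctly observe, that inference is invalid for arbitrary joint laws (your XOR-style event $\{x[s^\text{ex}]=y\}$ is the standard counterexample), and what rescues it here is the structural fact that $\zeta$ reads only $x$, so that $p(E\mid x,y,\zeta)=p(E\mid x,\zeta)$; chaining this with the no-feature-leakage identity gives the forward direction, and your averaging argument gives the converse (modulo the $\Omega$-membership bookkeeping you already flag, which the paper is equally loose about). In short: your proof is not only correct but identifies and fills a real gap in the paper's one-line deduction, and your recommendation to promote the ``selector never observes $y$'' property to an explicit assumption is well taken --- it is exactly the hypothesis the corollary's forward direction rests on.
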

\begin{proof}
    Follows directly from Theorem~\ref{theorem:labelleakage} and Theorem~\ref{theorem:featleakage} in Appendix~\ref{section:proofconditions}. 
\end{proof}
In other words, a feature selector has no leakage if the probability of a selection is only determined by the values of the selected features, and not by the label or non-selected feature values.
Therefore, for any possible feature values $x$ and any label value $y$ and any selection mask $h$, any change in the label or in any of the features not selected by $h$ should not result in a different probability for the selection: $\zeta(h \mid x)$.
Thus, for any possible feature values $x'$ and label value $y'$, where the selected features have identical values: $x \odot h = x' \odot h$, the probability of the selection should be identical: $\zeta(h \mid x) = \zeta(h \mid x')$.

Intuitively, we can understand that if the value of the label or unselected features changes the behavior of $\zeta$, then it could be possible to infer information about unselected features or the label from the behavior of $\zeta$.
Accordingly, we can prove a feature selector has leakage by finding a single example of two pairs of $(x,y)$ and $(x',y')$ for which the above condition does not hold.
Conversely, to prove a feature selector has no leakage, we have to rule out the possibility of such an example entirely.

\section{A Linear Programming Solution}
\label{sec:linprog}

We now propose our first method that meets the above criteria using linear programming~\cite{dantzig1963linear}.
It requires full knowledge of the problem setting, i.e., $p(x,y)$ is known completely, and assumes a finite set of possible values for $x$.
In this setting, the perfect predictor is available, e.g., for a mean squared error loss:
$f^*( x \odot h) = \mathbb{E}_x[y \,|\, x\! \odot\! h] = \sum_{x' : x'\odot h = x \odot h} p(x') \sum_{y} p(y \,|\, x') y$,
and thus, only $\zeta$ has to be optimized.
Corollary~\ref{corollary:mainresult} shows that the probability of any masked feature vector $x \odot h$ should only depend on the selected features, since:
\begin{align}
\forall (x, x', h), \; &\big( p(x) > 0 \land p(x') > 0 \land (x \odot h) = (x' \odot h) \big)
\nonumber \\
& \hspace{1.75cm} \longrightarrow \zeta(h \mid x) = \zeta(h \mid x').
\label{eq:linprog1}
\end{align}
Therefore, for optimization, we only have to consider a single probability variable for every possible set of values for $x \odot h$.
The probability variables should be chosen to minimize: $\mathcal{L}(\zeta, f^*)$ (Eq.~\ref{eq:genericloss}), under the constraint that they describe valid probability distributions:
\begin{align}
&\forall x, \; p(x) > 0
\longrightarrow \Big(
\sum\nolimits_{h \in \zeta(x)}
\zeta( h \mid x ) 
\label{eq:linprog2} \\[-1ex] \nonumber & \hspace{0.9cm}
=
\sum_{s^\text{in}\!, s^\text{ex} : s^\text{in} \cup  s^\text{ex} =  \{1, 2, \mydots, d\}\hspace{-2cm} }
p( h[s^\text{in}] = 1, h[s^\text{ex}] = 0 \mid x[s^\text{in}], \zeta ) = 1
\Big).
\end{align}
Appendix~\ref{appendix:linearprogram} details how this task is translated to a linear programming problem.
Whilst its requirements limit it to unrealistic toy problems, this method enables us to closely approximate the Pareto optimal front of selection without leakage, which we use in our analysis of existing methods.

\section{Sequential Unmasking without Reversion}
\label{sec:method}

In this section, we propose a more practical method titled \acfi{SUWR}, which describes a feature selection algorithm that provenly has no leakage, but is applicable to more realistic settings than the linear programming solution.
\ac{SUWR} guarantees no leakage by approaching the selection of features as a sequential decision process where each decision is only based on a specific subset of feature values, and no decision can be reversed at a later step.
The core of \ac{SUWR} is its selection inference algorithm, which is agnostic to what underlying \ac{ML} model is used and how it is optimized.
Therefore, \ac{SUWR} can be seen as a generic framework that can easily be extended and adapted to specific feature selection problems.

\subsection{Feature selection inference with SUWR}
\label{sec:method:inference}

From Section~\ref{sec:leakage}, we know that a feature selector $\zeta$ without leakage, should base the probability of a specific selection only on the values of the selected features.
As discussed in Section~\ref{sec:linprog}, the probability distribution over each possible selection of feature values has to be valid.%
\footnote{
Meeting both of these criteria is not trivial, since a standard normalization term would depend on all possible selections for an instance $x$ and thus also on non-selected features;
i.e., $\zeta(h \mid x) \coloneqq \hat{\zeta}(h \mid x) / \sum_{h} \hat{\zeta}(h \mid x)$ is not allowed since the normalizing denominator depends on all feature values.
}
Based on these properties, we propose \ac{SUWR} which meets these criteria through sequential selection.
Algorithm~\ref{alg:inference} describes inference with \ac{SUWR} in pseudocode,
the remainder of this section describes it step-by-step.

\begin{algorithm}[t]
\caption{Inference with the \acs{SUWR} method.} 
\label{alg:inference}
\begin{algorithmic}[1]
\STATE \textbf{Input}: Features: $x$, Max-$t$: $T$, Selector: $\zeta$, Predictor: $f$
\STATE $h \leftarrow \mathbf{0}$ %
\FOR{$t \in [0,1,\ldots,T-1]$}
    \IF{Bernoulli\_Trial$( \zeta^{t}_\text{stop}(x \odot h))$}
        \STATE \textbf{Return}: $(f(x \odot h), h)$ %
    \ENDIF
    \STATE $h \leftarrow h + \text{Sample\_Mask}(\zeta^{t}_\text{select}(x \odot h))$  \COMMENT{Eq.~\ref{eq:SUWR:sampling}}
\ENDFOR
\STATE \textbf{Return}: $(f(x \odot h), h)$ %
\end{algorithmic}
\end{algorithm}

\ac{SUWR} requires a model $\zeta$ that can output a stop probability and a distribution to sample feature indices, given an input of masked features.
The feature selection process takes place over $T$ steps, each step starts by deciding whether to stop the process, and if not, which features to select next.
For a step $t$, where $0 \leq t < T$, a Bernoulli trial is performed according to $\zeta^t_\text{stop}(x \odot h^{t})$ and if successful then the process is stopped and $h^t$ is the final feature selection and $f(x \odot h^t)$ the final prediction.
Otherwise, the process continues and a new set of feature indices is sampled and added to the selection mask:
\begin{equation}
u^{t} \sim \zeta^t_\text{select}(x \odot h^{t}), \qquad h^{t+1} = h^{t} + u^{t}.
\label{eq:SUWR:sampling}
\end{equation}
Importantly, both the stop probability and the sampling of new features are only conditioned on the values of features selected in the previous steps ($x \odot h^{t}$).
Accordingly, the first step ($t=0$) starts with an empty mask $h^0 = \mathbf{0}$, and the stop probability $\zeta^0_\text{stop}(x \odot h^0) = \zeta^0_\text{stop}(\emptyset)$ is constant over $x$, similarly, the feature distribution $\zeta^0_\text{select}(x \odot h^{0})$ is the same for every $x$ in the first step.
Additionally, since each step only adds features to the selection and never removes any, the probability of the decisions that lead to $h^t$ in a step $t$, only depends on the values of features selected in previous steps ($ x \odot h^{t-1}$).
If the final step $t = T-1$ is reached, then the process is automatically stopped ($\zeta^T_\text{stop}(\cdot) = 1$) and the final selection is $h^T$ and the final prediction $f(x \odot h^T)$.

As we can see, \ac{SUWR} is completely agnostic to what the underlying model $\zeta$ and predictor $f$ are; it only requires them to handle masked inputs and $\zeta$ to output a stop probability and feature distribution.
The parameter $T$ acts as a computational budget as it ensures the process halts within $T$ steps.
Additionally, $T$ is also a feature budget when $\zeta$ limits the number of features to be sampled per step.

Appendix~\ref{appendix:methodproof} provides a full proof that proves \ac{SUWR} has no leakage.
The intuition behind this property is straightforward:
Any decision to select a feature is never based on information from (thus far) unselected features.
Therefore, the value of a feature that is not in the final selection could never affect its probability.
Furthermore, the process guarantees a selection is always made, thereby providing a valid probability distribution over all possible feature selections.

In addition, in Appendix~\ref{appendix:onlymethodproof} we conjecture that the \ac{SUWR} algorithm describes every possible selection policy without leakage,
when the feature value distribution provides support for the Cartesian product of possible feature values:
\begin{equation}
\begin{split}
\forall i,j,a,b, \;\; &\big( p(x[i] = a) > 0 \land p(x[j] = b) > 0 \big)
\\&\qquad\quad
\longrightarrow p(x[i] = a, x[j] = b) > 0.
\end{split}
\label{eq:featureproduct}
\end{equation}
In other words, we conjecture that when Eq.~\ref{eq:featureproduct} holds, the inference of any feature selection policy without leakage can be computed by the \ac{SUWR} algorithm.
Therefore, in this setting, \ac{SUWR} captures \emph{all} solutions to feature selection without leakage,
and thus, \ac{SUWR} provides the \emph{only} solution to feature selection without leakage when Eq.~\ref{eq:featureproduct} is true.

\subsection{Optimization of \ac{SUWR} feature selection policies}
\label{sec:method:optimization}

While \ac{SUWR} inference strictly follows Algorithm~\ref{alg:inference} to prevent leakage,
there are no restrictions on the optimization of the underlying $\zeta$ and $f$ models.
Therefore, any optimization method can be chosen without risking the introduction of leakage.
For this paper, we propose a reinforcement learning optimization approach that is evaluated in our experiments.

The set of possible feature selections grows exponentially with the number of features, it is therefore important that we avoid iterating over all possibilities.
We use a REINFORCE approach~\cite{sutton1999policy} and repeatedly sample a set of $T$ selection steps while ignoring the stop probabilities.
Thus, we start at $t=0$ with the zero selection:
$\bar{h}^0_i = \mathbf{0}$,
and for each subsequent step $t$, we follow the SUWR procedure:
$\bar{u}^t_i \sim \zeta(x_i \odot \bar{h}^{t-1}_i )$,
$\bar{h}^t_i = \bar{h}^{t-1}_i + \bar{u}^t_i$.
For each datapoint $x_i$, this results in a sampled sequence of $T$ selection masks:
$\bar{H}_i = \{\bar{h}_i^0,\bar{h}_i^1,\!\mydots,\bar{h}_i^T\}$.
The probability that \ac{SUWR} stops at any $t$, conditioned on the sampled sequence is:
\begin{equation}
p_\text{stop}(t \mid \bar{H}_i)
\coloneqq
\zeta^t_\text{stop}(x_i \odot h^{t}_i) \prod_{j=0}^{t-1}\big(1- \zeta^j_\text{stop}(x^j_i \odot h^j_{i})\big).
\end{equation}
Using this formulation, we can create the following unbiased estimate of our generic loss function (Eq.~\ref{eq:genericloss}):
\begin{align}
    &
   \bar{\mathcal{L}}(\zeta,  f) \coloneqq
    \\&\;\;\;\;\;\; \nonumber
     \frac{1}{N} \sum_{i=1}^N  
    \sum_{t=0}^T  p_\text{stop}(t \mid \bar{H}_i)
    \big(L(f(x_i \odot \bar{h}^t_i), y_i) + \lambda \lVert \bar{h}_i^t \rVert \big).
\end{align}
Computing its gradient w.r.t.\ $\zeta_\text{stop}$ is straightforward; for the gradient w.r.t.\ $\zeta_\text{select}$, we use the log-trick from the REINFORCE method~\cite{sutton1999policy}.
Then, we apply standard gradient descent to optimize both $\zeta$ and $f$ based on our sampled loss $\bar{\mathcal{L}}$.

\subsection{Discussion}
\label{sec:method:discussion}

Since we can prove \ac{SUWR} has no leakage, each mask $h$ is guaranteed to indicate the only features that were used to make its corresponding prediction.
To the best of our knowledge, \ac{SUWR} is the first method to have this guarantee, therefore, we argue it is also the first feature selection method that guarantees its explanations are \emph{faithful}~\citep{interpret-social-attribution}.
Furthermore, the sequential selection procedure can be interpreted as a step-by-step narrative of how the prediction was constructed.
For example, Figure~\ref{fig:shoes} displays multiple steps of \ac{SUWR} on images of a sandal and a boot.
At each step, we can see what information became available to the predictor and how this changes its predictions.
Thereby, this step-by-step explanation provides even more insight than the final selection mask.
We believe \ac{SUWR} is the first approach that produces narrative explanations about feature importance.

While the guarantee of no leakage is a great advantage over existing methods, the \ac{SUWR} algorithm could potentially require more computational costs than previous approaches.
Namely, for each intermediate feature selection step, a call to $\zeta_\text{select}$ is made.
This could pose a challenge to data with high dimensionality, e.g., if $\zeta$ only selects a single feature per step, and thus a high $T$ should be chosen.
Luckily, the \ac{SUWR} framework is highly flexible and can be adapted to handle such situations better.
For instance, one can choose $\zeta_\text{select}$ to be a lightweight model that can choose multiple features at once.
In our experiments in Sections~\ref{sec:experiments-toy}~\&~\ref{sec:experiments-synthetic}, we choose $\zeta_\text{select}$ to be a model that selects one feature per step $t$; in contrast, for the experiment based on image data in Section~\ref{sec:experiments-MNIST}, we use a $\zeta_\text{select}$ that selects a patch of nine pixels per step.
This makes the resulting selection easier to interpret than one where individual pixels can be selected, while at the same time reducing the number of steps needed to select a complete image.
We expect that specific $\zeta_\text{select}$ models can be developed to increase the computational efficiency and scalability of \ac{SUWR} further.

Nevertheless, we want to note that there are some unintuitive aspects of \ac{SUWR} that seem to be unavoidable consequences from the definition of leakage.
In particular, at the first step ($t=0$) \ac{SUWR} selects features without conditioning on any feature values, thus this first step can be seen as a \emph{blind} selection.
While $\zeta_\text{select}(\emptyset)$ can be optimized to select the most informative features, its distribution over features must be the same for all possible values of $x$.
At first glance this may seem counter-intuitive, however, it appears that this is an inevitable consequence of selecting without leakage.
Consider a setting where we wish to select a single feature per $x$ without leakage, according to Corollary~\ref{corollary:mainresult}, the selection of a single feature can only depend on the value of that single feature.
However, if the distribution of features supports the Cartesian product of possible feature values (Eq.~\ref{eq:featureproduct}), then the probability of each mask is not dependent on any feature values.
To put this formally, let $h_{\text{only }i}$ indicate the mask where only feature $i$ is selected: $h_{\text{only }i}[i] = 1, \forall j \not= i, h_{\text{only }i}[j] = 0$, if only such masks can be chosen then the probability each mask is independent of any feature value since:
\begin{align}
&\zeta(h_{\text{only }i} \mid x[i])
\\
&\;\;
= 1 - \max_{\{x' : p(x') > 0 \land x[i] = x'[i]\}}\sum_{j: 0 < j < d \land i \not= j \hspace{-8mm}}  \zeta(h_{\text{only }j} \mid x'[j])
\nonumber\\
&\;\;
= 1 - \max_{\{x' : p(x') > 0\}}\sum_{j: 0 < j < d \land i \not= j \hspace{-8mm}}  \zeta(h_{\text{only }j} \mid x'[j])
= \zeta(h_{\text{only }i}),
\nonumber
\end{align}
where we rely on the fact that Eq.~\ref{eq:featureproduct} implies that the maximum operator over unselected feature values is a constant w.r.t.\ the value of any selected feature $x[i]$.
Thus, this derivation proves that, in this setting, blind selection is necessary for feature selection without leakage.

\begin{figure}[t]
    \centering
    \includegraphics[scale=0.4]{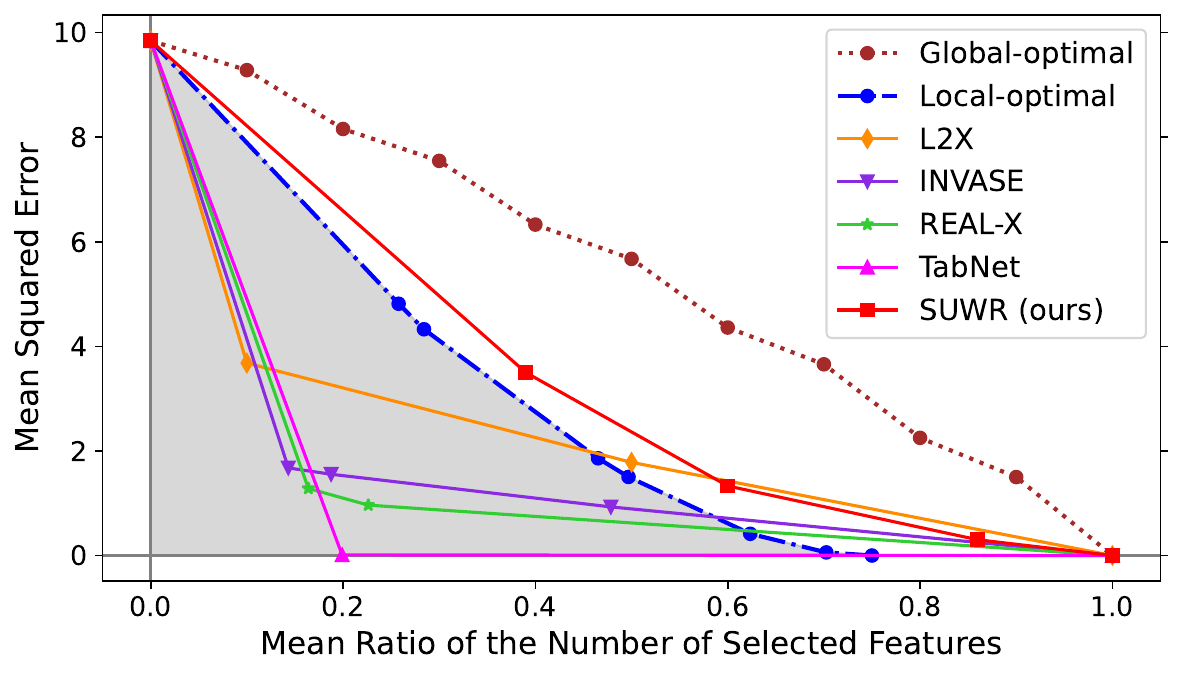}
    \vspace{-\baselineskip}
    \caption{
    Performance curves of the first experiment.
    Grey area indicates performance that is impossible without leakage.
    }
    \label{fig:toy-pareto}
    \vspace{-1.3\baselineskip}
\end{figure}

\section{Experiment 1: Pareto Front Analysis}
\label{sec:experiments-toy}

\mpara{Setup}.
Our first experiment is designed to identify whether existing methods have leakage.
For that, we design an idealized setup where complete information is available so the Pareto front can be approximated.
Leakage can then be identified by performance that exceeds that front.
Specifically, we design a toy problem with ten binary features: $x \in \{0, 1\}^{10}$, in a uniform distribution: $p(x) \coloneqq 1024^{-1}$.
As labels we use a sum of the product of feature pairs:
$y \coloneqq (\sum_{i = 1}^{5} x_{2 i -1} x_{2 i})^2$,
this induces feature redundancies that enable interesting local feature selection.
For example, if $x_1 = 0$ then the value of $x_2$ is irrelevant to $y$, but if $x_1 = 1$ then $x_2$ is relevant; this is a typical kind of pattern that only local feature selection methods can capture.
For this experiment, all methods are provided all possible values of $x$ and $y$, this creates a fair comparison with the Pareto front which is constructed using the same complete information.

\mpara{Methods}.
The comparison includes the following state-of-the-art methods:
\begin{enumerate*}[label=(\roman*)]
\item L2X~\citep{l2x};
\item INVASE~\citep{invase};
\item TabNet~\citep{DBLP:conf/aaai/tabnet}; 
\item REAL-X~\citep{DBLP:conf/aistats/explanation-encode-pred},
and
\item our proposed \ac{SUWR} method.
In addition, we added the following for further insight:
\item local-optimal, a close approximation of the Pareto front using the linear programming method from Section~\ref{sec:linprog};
and \item global-optimal, the Pareto front of global feature selection computed through brute-force.
\end{enumerate*}
All methods optimize the same feed-forward network architectures for $\zeta$, except TabNet which requires a method-specific architecture.
We repeat optimization with various $\lambda$ weights to visualize the tradeoff between feature sparsity and accuracy for each method.
Further details on the experimental setup can be found in Appendix~\ref{appendix:toy-dataset}.

\mpara{Results}.
Figure~\ref{fig:toy-pareto} displays the performance curves of each method in terms of \ac{MSE} and mean ratio of the number of selected features.
There is a large gap between the Pareto fronts of local and global feature selection, showing the usefulness of local selection in this setting.
However, \emph{all} of the baseline methods produce policies that improve over the Pareto front, which is \emph{impossible without leakage}.
For instance, TabNet only needs two features to achieve perfect prediction.
Clearly, given the formula for $y$, this is impossible with predictors that truly only use two features.
Therefore, our results prove that L2X, INVASE, TabNet and REAL-X \emph{all have leakage}, and thus, that their selectors encode additional information into their selections.
Even though REAL-X was specifically proposed to mitigate this issue by adding noise to $h$, our results prove that this strategy is not enough to prevent leakage.
In contrast, \ac{SUWR} is the only method that is close to the Pareto front and stays in the range of possible performance.
As expected, because \ac{SUWR} is guaranteed to have no leakage.

\setlength{\tabcolsep}{2.5pt}
\begin{table*}[t]
 \vspace{-0.6\baselineskip}
    \caption{Selection and prediction performance on the synthetic benchmark of the second experiment. Results are averages over five runs.}
    \vspace{-\baselineskip}
    \label{tab:syn-results}
    \begin{center}
    \resizebox{\textwidth}{!}{
    \begin{tabular}{l|ccc|ccc|ccc|cccc|cccc|cccc}
    \toprule
    Dataset & \multicolumn{3}{c|}{\bf Syn1 ($g_1$)}  &\multicolumn{3}{c|}{\bf Syn2 ($g_2$)} &\multicolumn{3}{c|}{\bf Syn3 ($g_3$)} &\multicolumn{4}{c|}{\bf Syn4 ($g_4$)} & \multicolumn{4}{c|}{\bf Syn5 ($g_5$)} & \multicolumn{4}{c}{\bf Syn6 ($g_6$)}\\
    \midrule
    Metrics & TPR$\uparrow$ & FDR$\downarrow$ & AUROC$\uparrow$ & TPR & FDR & AUROC & TPR & FDR & AUROC & CFSR$\uparrow$ & TPR$\uparrow$ & FDR$\downarrow$ & AUROC$\uparrow$ & CFSR & TPR & FDR & AUROC & CFSR & TPR & FDR & AUROC \\
    \midrule
    w/o FS & 100. & 82. & .578 & 100. & 64. & .789 & 100. & 64. & .854 & 100. & 100. & 64. & .558 & 100. & 100. & 64. & .662 & 100. & 100. & 55. & .692 \\
    Oracle & 100. & 0. & .700 & 100. & 0. & .895 & 100. & 0. & .903 & 100. & 100. & 0. & .818 & 100. & 100. & 0. & .823 & 100. & 100. & 0. & .902  \\
    \midrule
    L2X  & 33.2 & 33.6 & .675 & 44.6 & 55.4 & .872 & 66.0 & 34.  & .889 & 56.5 & 79.2 & 34.7 & .781 & 51.0 &  71.9 & 43.6 & .788 & 34.0 & 80.1 & 19.9  & .876 \\
    INVASE & 100. & 0. &  .692 & 100. & 0. & .873 & 95.0 & 0. & .883 & 56. & 91. & 10.2 & .792 & 40.7 & 76. & 2.2 & .780 &  60.7 & 89.4 & 7.0 & .877\\
    TabNet & 86.4& 57.9 & .667   & 98.7 & 5.6 & .885  & 96.6 & 9.7 &.903  & 99.7 & 91.5 & 29.5 & .789  & 98.9 & 92.5 & 36.2 & .791 & 100. & 97.5 &  23.6 & .870  \\
    REAL-X & 100. & 24.2 & .661 & 100. & 20.0 & .794 & 100. & 7.94 & .873 & 100. & 99.9&  41.9 & .748 & 100. & 99.8 & 52.4 & .774 & 100. & 97.2 & 8.27 & .842 \\
    \midrule
    \ac{SUWR} & 100. & 2.35 & .700 & 97.0 & 0. & .895 &  100.& 0. & .903 & 100. & 98.0 & 20.0 & .810 & 100.& 99.6 & 20.0 & .816 & 100. & 97.4 & 0.37 & .896\\
    \bottomrule
    \end{tabular}}
    \end{center}
    \vspace{-1.5\baselineskip}
\end{table*}

\mpara{Conclusion}.
These results conclusively prove that all of the baseline methods have leakage.
To the best of our knowledge, we can therefore conclude that \ac{SUWR} is the \emph{first} and the \emph{only} local feature selection method without leakage.

\section{Experiment 2: Synthetic Benchmark}
\label{sec:experiments-synthetic}
\mpara{Setup}.
Whilst \ac{SUWR} has excellent performance for the first experiment (Section~\ref{sec:experiments-toy}), it concerned an idealized complete-information setting.
Our second experiment aims to evaluate its generalizability by considering a more realistic setup where the training and test sets are separated.
For a better comparison with previous work, we use an existing benchmark~\citep{l2x,invase,DBLP:conf/aistats/explanation-encode-pred}.
In this setup, eleven features, $x \in \mathbb{R}^{11}$, are sampled from a normal distribution: $x[i] \sim \mathcal{N}(0, 1)$.
Labels are binary, $y \in \{0,1\}$, and sampled according to $p(y = 1 \mid x) \coloneqq \frac{1}{1 + g(x)}$.
The $g(x)$ function thus determines the relation between $x$ and $y$. 
Six different $g(x)$ functions are used, the first three use non-overlapping sets of features:
$g_1(x) \coloneqq \exp(x[1] x[2])$;
$g_2(x)  \coloneqq \exp(\sum_{i=3}^6 x[i]^2 - 4)$; and
$g_3(x) \coloneqq  -10\sin(2x[7]) + 2|x[8]| + x[9] + \exp(-x[10])$.
The latter three use a selection function based on the eleventh feature:
$z(x, g, g') \coloneqq  \mathds{1}[x[11] < 0] g(x)  +  \mathds{1}[x[11] \geq 0] g'(x)$,
to choose between the first three functions:
$g_4(x) \coloneqq z(x, g_1, g_2)$;
$g_5(x) \coloneqq z(x, g_1, g_3)$; and
$g_6(x) \coloneqq z(x, g_2, g_3)$.
Thereby, the latter are specifically designed for local feature selection where the eleventh feature (called the \textit{control-flow} feature) determines the relevance of the other features.
We use 10,000 independent samples for training and another 10,000 as the test set.

\mpara{Methods}.
The same methods are included as in the first experiment (Section~\ref{sec:experiments-toy}).
Additionally, we also train a predictor without feature selection (w/o FS) and another with an oracle selector that only selects the features used by $g(x)$ for each $x$. 
See Appendix~\ref{appendix:synthetic-dataset} for more details.

\mpara{Metrics}.
We use the same metrics as \citet{DBLP:conf/aistats/explanation-encode-pred}:
the \textit{true positive rate}: $\text{TPR} = \frac{\text{\# selected relevant features}}{\text{\# relevant features}}$;
the \textit{false discovery rate} $\text{FDR} = \frac{\text{\# selected irrelevant features}}{\text{\# selected features}}$;
and the \textit{control-flow selection rate} (CFSR): the frequency of selecting the eleventh feature.
To measure predictive performance, we use the \textit{area under the receiver operating characteristic curve} (AUROC).
We note that a low CFSR score indicates leakage especially when TPR or AUROC is high, because it means the feature selection method actually uses the control-flow feature but does not select it.

\mpara{Results}.
Table~\ref{tab:syn-results} displays our results on the synthetic benchmark test set.
Interestingly, there is a large gap in the AUROC between the baseline without feature selection and the oracle baseline in all settings,
this indicates that excluding irrelevant features can make prediction substantially easier. %

In terms of AUROC, \ac{SUWR} consistently has the highest performance of all methods (excluding the oracle), with especially high margins on the latter three settings (Syn4-6).
In the first three settings (Syn1-3), \ac{SUWR} reaches oracle performance; whilst among the other methods, only TabNet is able to reach oracle performance in the third setting (Syn3).
This is surprising, since the first experiments showed that the existing methods could reach extremely high performance through leakage.
However, a key difference with the first experiment is that  in this setting evaluation is based on a held-out test set.
Therefore, leakage could instead result in heavy overfitting in this setting, whereas it could not in the first experiment.
We believe that this explains why \ac{SUWR} has substantially higher predictive performance for the second experiment:
There are many more ways to overfit \emph{with} leakage than \emph{without}, as a result, \ac{SUWR} is less prone to overfitting than the existing methods.

In terms of correct feature selection, \ac{SUWR} has a near-perfect TPR that is greater than 97\% across all settings and a perfect CFSR of 100\% in the relevant settings (Syn4-6).
REAL-X is the only baseline that has comparable TPR and CFSR across all settings.
The FDR of \ac{SUWR} is consistently lower than all baselines in all settings, except for INVASE which does better in the fourth and fifth setting (Syn4-5).
Nevertheless, INVASE also has a very low CFSR and TPR in these settings, which strongly suggests that it is benefitting from leakage.
Accordingly, the possibility of feature leakage makes it difficult to compare the feature sparsity of \ac{SUWR} with the baselines.
Nonetheless, in our results, \ac{SUWR} has near-perfect TPR and perfect CFSR, and the best FDR of baselines with comparable TPR and CFSR.

Additional results in Appendix~\ref{appendix:synthetic-dataset} also show that \ac{SUWR} consistently learns to select the control-flow feature first and that \ac{SUWR} is very robust to the budget parameter $T$.

\begin{figure*}[t]
    \begin{center}
    \vspace{-0.25\baselineskip}
        \includegraphics[width=0.92\textwidth]{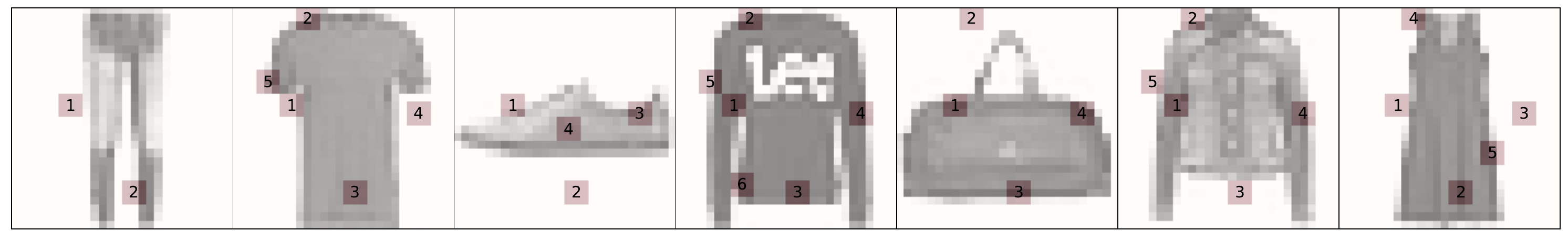}
    \vspace{-\baselineskip}
        \caption{
        Several selection masks produced by SUWR for different fashion items from fashion-MNIST. 
        Red squares indicate selected patches, the numbers shown inside indicate at what step each patch was selected. 
        All items were correctly classified by SUWR.
        }
        \label{fig:fashion-mnist}
        \vspace{-0.75\baselineskip}
    \end{center}
\end{figure*}

\begin{figure*}[t]
    \begin{center}
        \includegraphics[width=0.92\textwidth]{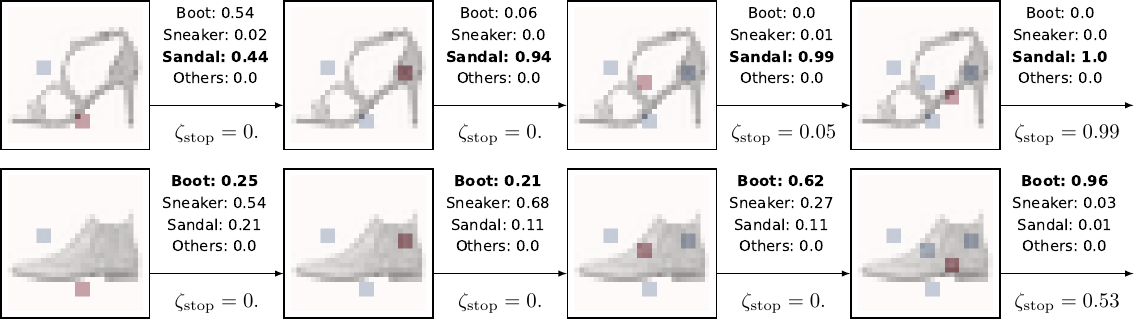}
        \vspace{-0.75\baselineskip}
        \caption{
        Narrative explanations derived from the SUWR inference process for a sandal (top) and boot (bottom) from fashion-MNIST.
        Step $t=2$ up to $t=5$ are visualized, red squares indicate patches selected in that step, blue squares those selected in previous steps.
        }
        \label{fig:shoes}
        \vspace{-1\baselineskip}
    \end{center}
\end{figure*}

\mpara{Conclusion}.
Our results on the synthetic benchmark reveal that \ac{SUWR} reaches higher predictive performance than the baselines.
We believe this is the case because leakage makes local feature selection methods more prone to overfitting, from which \ac{SUWR} is unaffected.
Furthermore, it also appears that \ac{SUWR} selects nearly all relevant features while excluding more irrelevant features than baseline methods.

\section{Experiment 3: MNIST Digits and Fashion}
\label{sec:experiments-MNIST}
\mpara{Setup}.
Finally, we evaluate \ac{SUWR} on an image classification task on two datasets: digits-MNIST~\citep{digit-mnist} and fashion-MNIST~\citep{fashion-mnist}.
Both datasets consist of 28$\times$28 (784) pixel images and each image is annotated by one of ten classes, indicating either which digit or which type of fashion item is in the image.
Because individual pixels are difficult to see in visualizations, we let the methods select 3$\times$3 patches of pixels on the fashion dataset.
As a result, the produced selection masks are much easier to interpret as selected pixels are less scattered.

\mpara{Methods}.
We omit L2X and INVASE from this comparison due to their extremely unrealistic and unfaithful behavior in a previous study by \citet{DBLP:conf/aistats/explanation-encode-pred} (e.g., $96\%$ accuracy while selecting a single pixel).
Despite its leakage, we do include REAL-X since its introduction was motivated with its performance on digits-MNIST~\citep{DBLP:conf/aistats/explanation-encode-pred}.
Additionally, we include the \ac{CAE}~\citep{cae}, a state-of-the-art \emph{global} feature selection method, and a predictor trained without any feature selection.
More details are given in Appendix~\ref{appendix:mnist-datasets}.

\begin{figure}
    \begin{center}
        \includegraphics[width=1\columnwidth, trim=15pt 20pt 10pt 0pt]{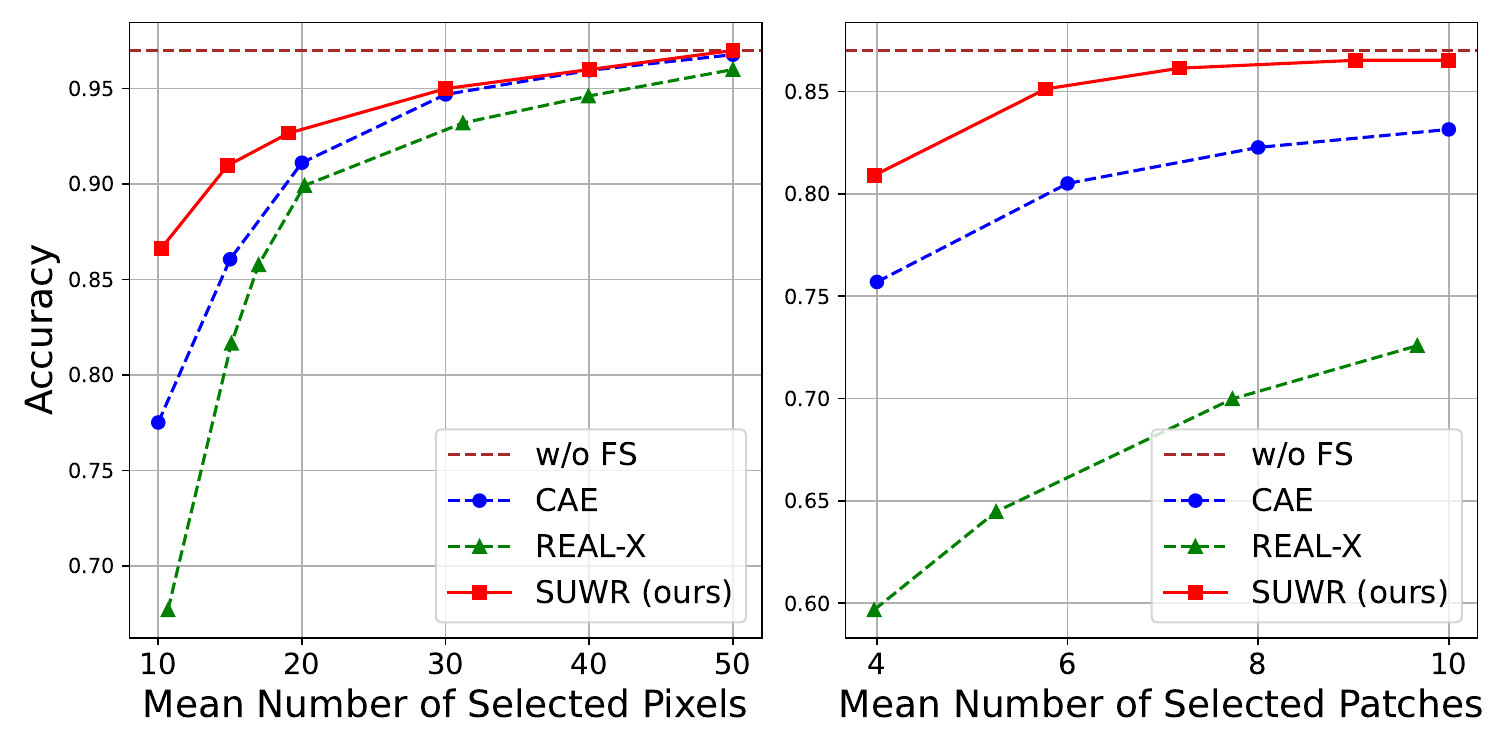}
        \vspace{-1.4\baselineskip}
        \caption{Results on MNIST: digits (left) and fashion (right).
        }
        \label{fig:mnist-results}
    \end{center}
    \vspace{-1.5\baselineskip}
\end{figure}

\mpara{Results}.
Figure~\ref{fig:mnist-results} displays the performance curves of the methods in terms of accuracy and the number of selected pixels or patches on the test sets.
We see that \ac{SUWR} consistently outperforms both CAE and REAL-X on both datasets, and even approximates the performance of the baseline without feature selection while only selecting a fraction of the features.
Admittedly, on digits-MNIST, the difference between \ac{SUWR} and CAE becomes marginal when more than thirty pixels are selected, indicating that local selection is less beneficial on this dataset.
In contrast, on fashion-MNIST, the differences between \ac{SUWR}, CAE and REAL-X are considerably large; for instance, CAE with ten patches does not yet achieve the performance that \ac{SUWR} reaches with just six patches.
Surprisingly, REAL-X consistently has considerably lower performance than both SUWR and CAE.
In other words, the local feature selection of REAL-X does substantially worse than even the global selections of CAE.
We speculate REAL-X is overfitting due to leakage, and additionally, that its intentional injection of noise during optimization hinders its performance.

\mpara{Conclusion}.
Our results on the MNIST datasets reveal that SUWR provides substantially better performance curves than REAL-X and CAE.
Thereby, SUWR shows that local feature selection \emph{without leakage} can provide considerably higher performance than global feature selection. %

\mpara{Interpretability}.
Lastly, we discuss several examples that illustrate how SUWR makes predictions more interpretable. %
Figure~\ref{fig:fashion-mnist} displays the selection masks for several items in fashion-MNIST, and the order in which patches were selected.
We see that the placement, order and number of selected patches highly varies per image.
Because SUWR has no leakage, we are certain that no features outside of the selected patches were used for prediction.
Thus, i.e., we know that the trousers were correctly classified based only on two patches. %
Similarly, the bag was classified using only four patches: three on its edges and an empty patch on the top.
While these insights may be surprising, they are provenly faithful and thus provide an accurate account of the complete information that \ac{SUWR} used for its predictions.

Figure~\ref{fig:fashion-mnist} illustrates several steps in the SUWR inference process for a sandal and a boot.
Besides what patches are selected per step, we also see how predictions and stop probabilities change as more features are selected.
This brings numerous interesting insights; e.g., the differences in predictions between the two items at $t=2$ can be attributed to a single pixel (top-left of the bottom patch).
Additionally, we see that the predictor is already correct about the sandal after the third patch, but \ac{SUWR} decides to select more features for more certainty.
To the best of our knowledge, SUWR is the first local feature selection method that provides narrative explanations that are  guaranteed to be faithful.

\vspace{-0.4\baselineskip}

\section{Conclusion}

This work has provided the first formal approach to feature and label leakage, which cause local feature selection methods to provide misleading explanations of what information predictions are based on.
We derived the necessary and sufficient conditions for leakage and introduced the first methods that are guaranteed to have no leakage: a linear programming method and \ac{SUWR}.
Our experimental results reveal that existing state-of-the-art methods are all subject to leakage, in addition to being misleading, this also appears to make them more prone to overfitting.
In contrast, our results indicate that \ac{SUWR} combines high selection sparsity with high predictive accuracy, outperforming all our baselines across several benchmarks.
Uniquely, the step-by-step SUWR process can be used as a narrative explanation itself.
The SUWR approach is generic and easily extendable, we believe it can serve as a strong foundation for future work on faithful interpretable \ac{ML} predictions with theoretical guarantees.

In particular, future work could consider methods to scale the SUWR approach to high-dimensional data.
For instance, by developing model architectures that can be applied efficiently in the SUWR framework.
Alternatively, one could investigate whether our definitions of leakage could be provide a basis for novel indicators of feature importance.
In order to promote future extentions of our work, we have made the implementations of our method and experiments publicly available at \url{https://github.com/GarfieldLyu/SUWR}.

\balance

\section*{Acknowledgements}

This work is partially supported by German Research Foundation (DFG), under the Project IREM with grant No.\,AN 996/1-1, and by the Netherlands Organisation for Scientific Research (NWO) under grant No.\ VI.Veni.222.269.

\section*{Impact Statement}

This paper makes a significant contribution to the field of interpretable machine learning, which is crucial for the development of transparent and hence responsible machine learning models. 
We show that our methods are versatile (covering at least two data types -- tabular data and images) and can be applied to numerous applications, from healthcare to finance.
Our research provides the theoretical foundation for further advancements in creating models that are not only effective but also intrinsically transparent and thus promote accountability. 
In an era where algorithmic decisions have profound impacts on individuals and societies, the methodologies presented in this paper ensure that these systems can be scrutinized and understood by stakeholders, thereby fostering trust and facilitating the broader adoption of AI technologies in sensitive and impactful domains.

\bibliography{reference}
\bibliographystyle{icml2024}

\newpage
\appendix
\onecolumn

\section{Related Work}
\label{appendix:relatedwork}
The current mainstream lines of work in interpretable machine learning have been categorized into \textit{explaining trained models in post-hoc manner} ~\citep{lime,saliency,deeplift,shap} and \textit{building intrinsically explainable models} \citep{expred,l2x,invase}. 
Due to the discrepancies within post-hoc methods~\citep{sanity-check}, the latter has been increasingly advocated in recent years. 
In the neural era, one common way of building interpretable models is via input features.  
The main idea is to learn to select a small set of informative input features and use those features exclusively for the final prediction. Meanwhile, explanations come from the selected features, e.g., pixels for images and words for texts (we note that in language tasks this sort of method is more often referred as \textit{rationale} models~\citep{lei2016rationalizing,bastings2019rartionale,DBLP:conf/emnlp/rationaleKLdiv,chen2022RationaleRobustness}).  Thus, \textit{sparsity} (i.e., the number of selected features) and the final prediction performance are considered together to measure the model effectiveness and explainability~\citep{deyoung2020eraser}.

\mpara{Feature selection as explanation.} One challenge of feature selection is the scarcity of ground-truth labels to indicate the importance of features.
As a result, existing solutions learn to select features by jointly optimizing predictive performance and selection sparsity.
This type of joint training is referred as \textit{Joint amortized explanation methods}~\citep{dabkowski2017AEM,DBLP:conf/nips/CXPlain,DBLP:conf/aistats/explanation-encode-pred}.
The learnable selection and prediction function (selector and predictor) can be two separated models, e.g., as for CAE~\citep{cae}, L2X~\citep{l2x}, INVASE~\citep{invase} and REAL-X~\citep{DBLP:conf/aistats/explanation-encode-pred}, or components within a single model, e.g., as for TabNet~\citep{DBLP:conf/aaai/tabnet}.
For the former type, the training signals (e.g., the gradients or rewards) between the predictor and selector are propagated via Gumbel-relaxation~\citep{gumbel-softmax} or policy gradient.
For TabNet, the selection is generated by sparsemax activation~\citep{sparsemax} and thus trained by standard back-propogation. Additionally, CAE conducts global selection, and the others are local selection methods that vary selections per instance.

\mpara{Irrationality of local feature selection.}
Nevertheless, local selection methods, particularly the joint amortized methods have raised increasing concerns in recent works~\citep{interpret-social-attribution,irrationality-rationales,fastshap-KL}. 
They argue the selected features do not necessarily align with the true explanations, and thus \textit{unfaithful} to the model behaviors. 
Furthermore, ~\citet{DBLP:conf/aistats/explanation-encode-pred} showcased the selection mask can leak prediction to the predictor, and therefore achieve unrealistic high performance whether the selected features are relevant or not. 
As a remedy, they proposed REAL-X, which aims to prevent the predictor overfitting on the selector by injecting noise into the selection masks.
Our work shows that REAL-X is still subject to leakage (Section~\ref{sec:experiments-toy}), and to the best of our knowledge, we have proposed the first theoretically guaranteed solutions to this problem.

\mpara{Dynamic feature selection.}
Another tangentially relevant line of work is dynamic feature selection~\citep{activate-selection,greedy-selection}.
Similar to SUWR, some dynamic feature selection methods also conduct a greedy selection procedure without access to the full feature set.
However, dynamic feature selection is designed for settings where features are costly, and selection should be made to avoid the costs associated with retrieving specific feature values.
This is a very different purpose than our work, hence their methods are not designed to address \textit{leakage}, nor do they formally analyse interpretability for ML models.

\section{Necessary and sufficient conditions for feature selection without label or feature leakage}
\label{section:proofconditions}

Our formal proofs for the conditions for leakage will rely on two basic assumptions:
\begin{assumption}
\label{ass:sellabelind}
The choice of selector policy has no effect on the label distribution:
\begin{equation}
\forall (x, y, s^\text{in},s^\text{ex}) \in \Omega, \qquad
  p(y \mid x[s^\text{in}]) = p(y \mid x[s^\text{in}], \zeta).
\end{equation}
\end{assumption}
\begin{assumption}
\label{ass:selfeatind}
The choice of selector policy has no effect on the feature distribution:
\begin{equation}
\forall (x, y, s^\text{in},s^\text{ex}) \in \Omega, \qquad
  p(x[s^\text{ex}] \mid x[s^\text{in}]) = p(x[s^\text{ex}] \mid x[s^\text{in}], \zeta).
\end{equation}
\end{assumption}
Together, these assumptions entail that the \emph{natural} distribution of features and labels is not dependent on the feature selector, i.e., $\zeta$ does not have any effect on the feature and label frequencies in the data.
\begin{theorem}
\label{theorem:labelleakage}
A features selector does not have label leakage if and only if every probability for every possible feature selection does not depend on label values:
\begin{equation}
\begin{split}
& \Big( \neg \, \text{Label-Leakage}(\zeta) \Big)
    \\ &\hspace{1cm} \longleftrightarrow
    \Big(\forall (x, y, s^\text{in},s^\text{ex}) \in \Omega, \quad
     p( h[s^\text{in}] = 1, h[s^\text{ex}] = 0 \mid x[s^\text{in}], \zeta) = p( h[s^\text{in}] = 1, h[s^\text{ex}] = 0 \mid x[s^\text{in}], y, \zeta)
\Big).
\end{split}
\label{eq:labeltheorem}
\end{equation}
\end{theorem}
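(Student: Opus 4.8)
The plan is to prove the biconditional in Theorem~\ref{theorem:labelleakage} by unpacking the definition of label leakage (Definition~\ref{def:labelleakage}) using Bayes' rule and Assumption~\ref{ass:sellabelind}. First I would introduce shorthand for the selection event, writing $E \coloneqq \{h[s^\text{in}] = 1, h[s^\text{ex}] = 0\}$, so that the right-hand side of Eq.~\ref{eq:labeltheorem} asks whether $p(E \mid x[s^\text{in}], \zeta) = p(E \mid x[s^\text{in}], y, \zeta)$ for all $(x,y,s^\text{in},s^\text{ex}) \in \Omega$. The key algebraic identity, obtained from Bayes' rule applied to the joint distribution of $y$ and $E$ conditioned on $x[s^\text{in}]$ and $\zeta$, is
\begin{equation}
p(y \mid x[s^\text{in}], E, \zeta) = \frac{p(E \mid x[s^\text{in}], y, \zeta)\, p(y \mid x[s^\text{in}], \zeta)}{p(E \mid x[s^\text{in}], \zeta)}.
\end{equation}
By Assumption~\ref{ass:sellabelind}, $p(y \mid x[s^\text{in}], \zeta) = p(y \mid x[s^\text{in}])$, so the left-hand side of the label-leakage condition, Eq.~\ref{eq:labelleakage}, is exactly $p(y \mid x[s^\text{in}])$, and the right-hand side, $p(y \mid x[s^\text{in}], E, \zeta)$, equals $p(y \mid x[s^\text{in}]) \cdot p(E \mid x[s^\text{in}], y, \zeta) / p(E \mid x[s^\text{in}], \zeta)$.

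For the ($\Longleftarrow$) direction, I would assume the probability of every selection is independent of the label, i.e.\ $p(E \mid x[s^\text{in}], y, \zeta) = p(E \mid x[s^\text{in}], \zeta)$. Substituting into the identity above, the ratio collapses to $1$, giving $p(y \mid x[s^\text{in}], E, \zeta) = p(y \mid x[s^\text{in}])$, which is precisely the no-label-leakage condition of Definition~\ref{def:labelleakage}. For the ($\Longrightarrow$) direction, I would assume no label leakage and derive the selection-probability condition: from Eq.~\ref{eq:labelleakage} we have $p(y \mid x[s^\text{in}], E, \zeta) = p(y \mid x[s^\text{in}])$, so the identity forces $p(E \mid x[s^\text{in}], y, \zeta) = p(E \mid x[s^\text{in}], \zeta)$ whenever $p(y \mid x[s^\text{in}]) > 0$ — and this positivity is guaranteed for every tuple in $\Omega$ by the clause $p(x[s^\text{in}], y) > 0$ in Definition~\ref{def:omegaset}, which (together with $p(x) > 0$) ensures both the conditioning events are well-defined and that we are never dividing by zero.

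The main obstacle I anticipate is handling the measure-theoretic/edge-case bookkeeping cleanly: one must verify that every conditioning event appearing in the Bayes manipulation has positive probability, that the equation is quantified exactly over $\Omega$ and not over impossible tuples, and that the direction of the implication is not silently weakened by conditioning on an event $E$ that has probability zero for some $x$ consistent with $x[s^\text{in}]$. The definition of $\Omega$ (Definition~\ref{def:omegaset}) is engineered precisely to sidestep these issues — the conjuncts $p(x) > 0$, $p(s^\text{in}, s^\text{ex} \mid x, \zeta) > 0$, and $p(x[s^\text{in}], y) > 0$ are exactly what make the Bayes step valid — so the proof is essentially a careful application of Bayes' rule plus Assumption~\ref{ass:sellabelind}, with the only real care needed being to keep the quantifier ranges aligned. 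I would present it as two short implications, each a one-line substitution into the displayed identity, preceded by the derivation of that identity. An entirely analogous argument, using Assumption~\ref{ass:selfeatind} and replacing $y$ by $x[s^\text{ex}]$, would give the companion Theorem~\ref{theorem:featleakage}, from which Corollary~\ref{corollary:mainresult} follows by conjunction.
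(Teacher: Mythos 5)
Your proposal is correct and follows essentially the same route as the paper: the paper's proof is exactly your Bayes-rule identity unrolled as "multiply both sides of Eq.~\ref{eq:labelleakage} by $p(h[s^\text{in}]=1,h[s^\text{ex}]=0\mid x[s^\text{in}],\zeta)$, invoke Assumption~\ref{ass:sellabelind}, then divide by the positive quantity $p(y\mid x[s^\text{in}])$," with reversibility of each step giving the biconditional. Your positivity bookkeeping via the clauses of $\Omega$ matches the paper's justification for the division step, so there is nothing to add.
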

\begin{proof}
First, 
we take Eq.~\ref{eq:labelleakage} from Definition~\ref{def:labelleakage} and multiply both sides with $p(h[s^\text{in}] = 1, h[s^\text{ex}] = 0 \mid x[s^\text{in}], \zeta)$, to get the following:
\begin{equation}
\forall (x, y, s^\text{in},s^\text{ex}) \in \Omega,
\quad
p(y \mid x[s^\text{in}])p(h[s^\text{in}] = 1, h[s^\text{ex}] = 0 \mid x[s^\text{in}], \zeta) 
= p(y, h[s^\text{in}] = 1, h[s^\text{ex}] = 0 \mid x[s^\text{in}], \zeta).
\label{eq:labelleakageproofstep1}
\end{equation}
From Assumption~\ref{ass:sellabelind}, we have $p(y \mid x[s^\text{in}]) = p(y \mid x[s^\text{in}], \zeta)$, from Definition~\ref{def:omegaset} we know these values are positive, and thus we can divide each side of Eq.~\ref{eq:labelleakageproofstep1} by them:
\begin{equation}
\forall (x, y, s^\text{in},s^\text{ex}) \in \Omega,
\quad
\frac{p(y \mid x[s^\text{in}])p(h[s^\text{in}] = 1, h[s^\text{ex}] = 0 \mid x[s^\text{in}], \zeta)}{p(y \mid x[s^\text{in}])}
=
\frac{p(y, h[s^\text{in}] = 1, h[s^\text{ex}] = 0 \mid x[s^\text{in}], \zeta)}{p(y \mid x[s^\text{in}], \zeta)}.
\label{eq:labelleakageproofstep2}
\end{equation}
Reformulating each side of the above equation, results in:
\begin{equation}
\begin{split}
\forall (x, y, s^\text{in},s^\text{ex}) \in \Omega, \quad
p(h[s^\text{in}] = 1, h[s^\text{ex}] = 0 \,|\, x[s^\text{in}], \zeta)
=
p(h[s^\text{in}] = 1, h[s^\text{ex}] = 0 \,|\, x[s^\text{in}], y, \zeta).
\end{split}
\label{eq:labelleakageproofstep3}
\end{equation}
Thereby, we have proven that the condition for label leakage in Eq.~\ref{eq:labelleakage} of Definition~\ref{def:labelleakage} implies the condition in Eq.~\ref{eq:labelleakageproofstep3}.
Since our derivation is still valid when reversed, it also proves Eq.~\ref{eq:labelleakageproofstep3} implies Eq.~\ref{eq:labelleakage}.
Therefore, the conditions are logically equivalent, this completes our proof.
\end{proof}

\begin{theorem}
\label{theorem:featleakage}
A features selector does not have feature leakage if and only if every probability for every possible feature selection does not depend on non-selected feature values:
\begin{equation}
\begin{split}
&\Big( \neg \, \text{Feature-Leakage}(\zeta) \Big)
\\& \hspace{0.5cm}
    \longleftrightarrow \Big(\forall (x, y, s^\text{in},s^\text{ex}) \in \Omega, \;\;\;
    p( h[s^\text{in}] = 1, h[s^\text{ex}] = 0 \mid x[s^\text{in}], \zeta) 
= p( h[s^\text{in}] = 1, h[s^\text{ex}] = 0 \mid x[s^\text{in}], x[s^\text{ex}], \zeta)
\Big).
\end{split}
\label{eq:feattheorem}
\end{equation}
\end{theorem}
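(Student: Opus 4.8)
The plan is to mirror the proof of Theorem~\ref{theorem:labelleakage} line for line, replacing the label $y$ throughout by the non-selected feature values $x[s^\text{ex}]$; the argument is a short chain of reversible algebraic steps starting from the defining condition of no feature leakage (Eq.~\ref{eq:featureleakage}).

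First I would take Eq.~\ref{eq:featureleakage} from Definition~\ref{def:featureleakage} and multiply both sides by the selection probability $p(h[s^\text{in}] = 1, h[s^\text{ex}] = 0 \mid x[s^\text{in}], \zeta)$. On the left this yields $p(x[s^\text{ex}] \mid x[s^\text{in}])\, p(h[s^\text{in}] = 1, h[s^\text{ex}] = 0 \mid x[s^\text{in}], \zeta)$; on the right, the chain rule (everything conditioned on $x[s^\text{in}]$ and $\zeta$) collapses the product into the joint $p(x[s^\text{ex}], h[s^\text{in}] = 1, h[s^\text{ex}] = 0 \mid x[s^\text{in}], \zeta)$. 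This is the feature-leakage analogue of Eq.~\ref{eq:labelleakageproofstep1}.

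Next I would invoke Assumption~\ref{ass:selfeatind} to rewrite $p(x[s^\text{ex}] \mid x[s^\text{in}])$ as $p(x[s^\text{ex}] \mid x[s^\text{in}], \zeta)$, and observe that this quantity is strictly positive: by Definition~\ref{def:omegaset} membership in $\Omega$ gives $p(x) > 0$, hence $p(x[s^\text{in}], x[s^\text{ex}]) > 0$ and therefore $p(x[s^\text{ex}] \mid x[s^\text{in}]) > 0$. Dividing both sides by this positive quantity leaves $p(h[s^\text{in}] = 1, h[s^\text{ex}] = 0 \mid x[s^\text{in}], \zeta)$ on the left, and, after applying the chain rule in reverse on the right, $p(h[s^\text{in}] = 1, h[s^\text{ex}] = 0 \mid x[s^\text{in}], x[s^\text{ex}], \zeta)$ — precisely the right-hand condition of Eq.~\ref{eq:feattheorem}.

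Finally, since every operation above is an equivalence — multiplication and division by strictly positive quantities, together with the chain rule — the derivation runs backwards as well, which gives the converse implication and completes the biconditional. The only thing that genuinely needs care (and it is mild) is justifying the positivity of the denominator $p(x[s^\text{ex}] \mid x[s^\text{in}], \zeta)$ in the division step; this is exactly what the $\Omega$ condition plus Assumption~\ref{ass:selfeatind} supply. Since no step uses any property specific to labels, the label-leakage argument transfers essentially verbatim, which is why I expect no real obstacle here.
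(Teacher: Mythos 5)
Your proposal is correct and follows essentially the same route as the paper's own proof: multiply Eq.~\ref{eq:featureleakage} by the selection probability, collapse via the chain rule, invoke Assumption~\ref{ass:selfeatind} and the positivity guaranteed by membership in $\Omega$ to divide, and note that every step is reversible to obtain the biconditional. This is precisely the paper's argument, which is itself stated as the label-leakage proof with $y$ replaced by $x[s^\text{ex}]$.
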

\begin{proof}
Analogous to the proof for Theorem~\ref{theorem:labelleakage}, we begin by taking
 Eq.~\ref{eq:featureleakage} from Definition~\ref{def:featureleakage} and multiply both sides with $p(h[s^\text{in}] = 1, h[s^\text{ex}] = 0 \mid x[s^\text{in}], \zeta)$, to get the following:
\begin{equation}
\forall (x, y, s^\text{in},s^\text{ex}) \in \Omega,
 \quad
p(x[s^\text{ex}] \mid x[s^\text{in}])p(h[s^\text{in}] = 1, h[s^\text{ex}] = 0, \mid x[s^\text{in}], \zeta) 
 = p(x[s^\text{ex}], h[s^\text{in}] = 1, h[s^\text{ex}] = 0, \mid x[s^\text{in}], \zeta).
 \label{eq:featleakageproofstep1}
\end{equation}
From Assumption~\ref{ass:selfeatind}, we have $p(x[s^\text{ex}] \mid x[s^\text{in}]) = p(x[s^\text{ex}] \mid x[s^\text{in}], \zeta)$, from Definition~\ref{def:omegaset} we know these values are positive, and thus we can divide each side of Eq.~\ref{eq:featleakageproofstep1} by them:
\begin{equation}
\forall (x, y, s^\text{in},s^\text{ex}) \in \Omega, \;\;
\frac{p(x[s^\text{ex}] \mid x[s^\text{in}])p(h[s^\text{in}] = 1, h[s^\text{ex}] = 0, \mid x[s^\text{in}], \zeta) }{p(x[s^\text{ex}] \mid x[s^\text{in}])}
=
\frac{p(x[s^\text{ex}], h[s^\text{in}] = 1, h[s^\text{ex}] = 0, \mid x[s^\text{in}], \zeta)}{p(x[s^\text{ex}] \mid x[s^\text{in}], \zeta)}.
\label{eq:featleakageproofstep2}
\end{equation}
Reformulating each side of the above equation, results in:
\begin{equation}
\forall (x, y, s^\text{in},s^\text{ex})\! \in\! \Omega, \quad
p(h[s^\text{in}] = 1, h[s^\text{ex}] = 0 \mid x[s^\text{in}], \zeta)
=
p(h[s^\text{in}] = 1, h[s^\text{ex}] = 0 \mid x[s^\text{in}], x[s^\text{ex}], \zeta).
\label{eq:featleakageproofstep3}
\end{equation}
Thereby, we have proven that the condition for feature leakage in Eq.~\ref{eq:featureleakage} of Definition~\ref{def:labelleakage} implies the condition in Eq.~\ref{eq:featleakageproofstep3}.
Since our derivation is still valid when reversed, it also proves Eq.~\ref{eq:featleakageproofstep3} implies Eq.~\ref{eq:featureleakage}.
Therefore, the conditions are logically equivalent, this completes our proof.
\end{proof}

\section{Local Feature Selection with SUWR has no Leakage}
\label{appendix:methodproof}

\begin{theorem}
All \ac{SUWR} feature-selection policies have no leakage.
In other words, if the inference of a policy $\zeta$ is computable with \ac{SUWR} then it has no leakage according to Definition~\ref{def:leakage}.
\end{theorem}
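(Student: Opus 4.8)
The plan is to verify the two conditions of Definition~\ref{def:leakage} directly, i.e., to show that any \ac{SUWR} policy satisfies the characterization of Corollary~\ref{corollary:mainresult}: the probability of a selection $(h[s^\text{in}]=1, h[s^\text{ex}]=0)$ conditioned on $x[s^\text{in}]$ is unchanged by further conditioning on $x[s^\text{ex}]$ and $y$. Since $y$ never enters Algorithm~\ref{alg:inference} at all, label-independence is immediate once feature-independence is established; the work is all in showing the selection probability depends only on the values of the \emph{finally selected} features.

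First I would set up notation for a \emph{run} of the algorithm: a run is a sequence of decisions, one per step $t = 0, 1, \ldots$, each consisting of a Bernoulli stop outcome and (if not stopped) a sampled index set $u^t$, terminating at some step $\tau \le T$ with final mask $h = h^\tau$. For a fixed target selection $h$ with included set $s^\text{in}$, I would write the event $\{h[s^\text{in}]=1, h[s^\text{ex}]=0\}$ as a disjoint union over all runs that terminate with exactly that mask, and express its probability as a sum of products of per-step factors $\zeta^t_\text{stop}$ and $\zeta^t_\text{select}$. The key structural observation — already flagged in Section~\ref{sec:method:inference} — is that because the mask only grows, every intermediate mask $h^0 \subseteq h^1 \subseteq \cdots \subseteq h^\tau = h$ along such a run satisfies $h^t \le h$, so every intermediate masked input $x \odot h^t$ is a function of $x[s^\text{in}]$ alone and does not touch $x[s^\text{ex}]$. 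Consequently each factor $\zeta^t_\text{stop}(x\odot h^t)$ and $\zeta^t_\text{select}(x\odot h^t)$ appearing in the product is a function of $x[s^\text{in}]$ only, hence the whole sum is a function of $x[s^\text{in}]$ only. This gives
\begin{equation}
p(h[s^\text{in}]=1, h[s^\text{ex}]=0 \mid x[s^\text{in}], x[s^\text{ex}], y, \zeta) = p(h[s^\text{in}]=1, h[s^\text{ex}]=0 \mid x[s^\text{in}], \zeta),
\end{equation}
which is exactly the condition of Corollary~\ref{corollary:mainresult}, so by that corollary $\zeta$ has no leakage. I would also note in passing that the algorithm always returns some mask (the loop is bounded by $T$ and $\zeta^T_\text{stop}=1$), so these conditional probabilities are a genuine normalized distribution over selections, matching the validity requirement discussed in Section~\ref{sec:linprog}; strictly this is not needed for the leakage conclusion but it is worth stating for completeness.

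The main obstacle is bookkeeping rather than any deep idea: one must be careful that the set of runs terminating in a given $h$ can be enumerated cleanly (the same mask can be reached by many different orderings and groupings of index samples, and by stopping at different $t$), and that the per-step conditioning is handled rigorously — in particular that $\zeta^t_\text{select}$ may sample index sets containing indices already in $h^t$ or, if $T$ is not reached, indices outside $s^\text{in}$, which would make the run terminate in a different mask and hence not contribute to the event in question. The clean way to handle this is to define the probability of the event as a marginal over the tree of all runs and then argue that \emph{conditional on reaching any node of that tree whose mask is $\le h$}, the transition probabilities out of that node depend only on $x[s^\text{in}]$; runs that ever step outside $s^\text{in}$ simply do not land in the target event and contribute zero. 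A short induction on $t$ over "the partial run has mask $\le h$" closes this, after which the factorization and the appeal to Corollary~\ref{corollary:mainresult} finish the proof.
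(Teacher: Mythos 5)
Your proposal is correct and follows essentially the same route as the paper's proof in Appendix~\ref{appendix:methodproof}: the paper formalizes your ``sum over runs'' as a recursively defined reach-probability $q(t, h \mid x, \zeta)$ and shows by the same monotonicity argument (every intermediate mask is a subset of the final one, so every factor is conditioned only on $x[s^\text{in}]$) that the selection probability depends only on the selected feature values, then invokes Corollary~\ref{corollary:mainresult}. Your trajectory-sum bookkeeping and the paper's dynamic-programming recursion are two presentations of the same computation, and your observations about termination and about $y$ never entering the algorithm match the paper's treatment.
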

\begin{proof}
If $\zeta$ is computable by the inference algorithm of \ac{SUWR}, then it performs at most $T$ steps to make a selection.
From Algorithm~\ref{alg:inference}, we see that the creation of a selection ends when a Bernoulli trail with a probability determined by $\zeta_\text{stop}$ succeeds.
Therefore, the probability $\zeta(h \mid x)$ can be written as an expectation over $T$ steps; let $q(t = i, h \mid x, \zeta)$ indicate the probability that \ac{SUWR} reaches step $t = i$ and with the mask $h$, we can then formulate $\zeta(h \mid x)$ as:
\begin{equation}
\zeta(h \mid x) = q(t = T, h \mid x, \zeta)  + \sum_{i = 0}^{T-1} q(t = i, h \mid x, \zeta) \zeta_\text{stop}^{t=i}(x \odot h).
\label{eq:appendix:suwrreform}
\end{equation}
In less formal terms, it is a sum over the probability of reaching each possible step and the mask being $h$ at that step multiplied with the probability of stopping at that step.
Thus, $q(t = i, h \mid x, \zeta)$ is the probability of \ac{SUWR} \emph{reaching} a step, \emph{not necessarily stopping} at that step.
Accordingly, in the first step ($t=0$), the mask is always the empty mask, therefore:
\begin{equation}
q(t = 0,  h = \mathbf{0} \mid x, \zeta) = 1, \qquad q(t = 0,  h \not= \mathbf{0} \mid x, \zeta) = 0.
\end{equation}
To keep our notation brief, we call a mask a subset of another mask if it selects the same or a subset of features:
\begin{equation}
h' \subseteq h \longleftrightarrow (\forall i, \; h'[i] = 1 \longrightarrow h[i] = 1).
\end{equation}
This enables us to give a short definition general definition of $q(t,  h  \mid x, \zeta)$ by using its recursive nature:
\begin{equation}
q(t,  h  \mid x, \zeta)
=
\begin{cases}
1, & \text{if } t = 0 \land h = \mathbf{0},\\
0, & \text{if } t = 0 \land h \not= \mathbf{0},\\
\sum\limits_{h' : h' \subseteq h}q(t-1,  h' \mid x, \zeta)
(1 - \zeta_\text{stop}^{t-1}(x \odot  h'))\sum\limits_{u \in \{0,1\}^{d} : h' + u = h \hspace{-1.75cm}}\zeta_\text{select}^{t-1}(u \mid x \odot  h')
, & \text{otherwise.}
\end{cases}
\label{eq:appendix:gfunction}
\end{equation}
Thus, when $t > 0$, the value of $q(t,  h  \mid x, \zeta)$ is a sum over the probability that the previous step ($t-1$) was reached with a subset of $h' \subseteq h$, and that the \ac{SUWR} process did not stop, and that a new feature mask $u$ was sampled such that $h = h' + u$.
This recursion ends when $t=0$ is reached.

Clearly, we can see from Eq.~\ref{eq:appendix:gfunction} that for $t=0$ the $q$ function is not conditioned on $x$:
\begin{equation}
q(t = 0,  h = \mathbf{0} \mid x, \zeta) = q(t = 0,  h = \mathbf{0} \mid \zeta), \qquad q(t = 0,  h \not= \mathbf{0} \mid x, \zeta) = q(t = 0,  h \not= \mathbf{0} \mid \zeta),
\end{equation}
and therefore:
\begin{equation}
q(t = 0,  h \mid x, \zeta) = q(t = 0,  h \mid \zeta).
\end{equation}
Similarly, at $t=1$ the following holds:
\begin{equation}
q(t = 1,  h \mid x, \zeta) = q(t = 0,  h = \mathbf{0} \mid \zeta) (1 - \zeta_\text{stop}^{t=0}(\emptyset))\zeta_\text{select}^{t=0}(h \mid \emptyset),
\end{equation}
and therefore:
\begin{equation}
q(t = 1,  h \mid x, \zeta) = q(t = 1,  h \mid \zeta).
\end{equation}
We can continue this pattern by considering Eq.~\ref{eq:appendix:gfunction}, where we can see that when $t>0$ the $\zeta_\text{stop}$ and $\zeta_\text{select}$ only take subsets of $h$ as input.
Similarly, through the recursion of $q$ only subsets of $h$ are given as input to $q$, therefore, the recursion cannot add a dependency on any feature value not selected by $h$.
Consequently, the value of $q(t,  h \mid x, \zeta)$ does not depend on any values of $x$ not selected by $h$:
\begin{equation}
q(t,  h[s^\text{in}] = 1, h[s^\text{ex}] = 0  \mid x, \zeta) = q(t,  h[s^\text{in}] = 1, h[s^\text{ex}] = 0  \mid x[s^\text{in}], \zeta).
\end{equation}
Finally, combining this result with Eq.~\ref{eq:appendix:suwrreform}, we see that the final stop probability also does not add a dependency on feature values not selected by $h$, therefore:
\begin{equation}
\zeta(h[s^\text{in}] = 1, h[s^\text{ex}] = 0 \mid x) = \zeta(h[s^\text{in}] = 1, h[s^\text{ex}] = 0 \mid x[s^\text{in}]).
\end{equation}
According to Corollary~\ref{corollary:mainresult}, this proves that $\zeta$ does not have leakage.
\end{proof}

\section{Conjecture: SUWR Describes any Selection Policy without Leakage under Full-Support Feature Distributions}
\label{appendix:onlymethodproof}

\begin{assumption}
\label{assumption:appendix:featureproduct}
The feature value distribution provides support for the Cartesian product of possible feature values.
In other words, if there is a positive probability that feature $x[i]$ has value $a$ and a positive probability that feature $x[j]$ has value $b$, then there is a positive probability that feature $x[i]$ has value $a$ \emph{and} feature $x[j]$ has value $b$ simultaneously:
\begin{equation}
\forall i,j,a,b, \quad \big( p(x[i] = a) > 0 \land p(x[j] = b) > 0 \big)
\longrightarrow p(x[i] = a, x[j] = b) > 0.
\label{eq:appendix:featureproduct}
\end{equation}
\end{assumption}

\acrodef{RDHD}{reversed directed Hasse diagram}

\begin{definition}
We define a \acfi{RDHD} \ac{SUWR} policy as a \ac{SUWR} policy where the maximum step is the number of features: $T=d$, and  $\zeta^{t}_\text{select}(x \odot h)$ is a distribution over all single features that have not been selected yet:
\begin{equation}
\zeta^t_\text{select}( u \mid x \odot h) 
\begin{cases}
\geq 0 & \text{if }\,  \big( \exists! i, \; u[i] = 1 \big) \land \big( \forall i \in \{1,2,\ldots,d\}, \, u[i] = 1 \rightarrow h[i] = 0 \big), \\
= 0 & \text{otherwise.}
\end{cases}
\end{equation}
Thereby, at each step $t$, the process either stops or a single feature is added to $h$.
As a result, the number of features selected by $h^t$ is always equal to $t$: $\sum_{i=1}^d h^t[i] = d$.
An example visualization of the possible steps of a \ac{RDHD} \ac{SUWR} policy for three features is shown in Figure~\ref{fig:appendix:diagram}.

The naming of this type of policy is inspired by the fact that the inference process of a \ac{RDHD} \ac{SUWR} policy can be visualized as  traversing over a Hasse diagram (e.g., in Figure~\ref{fig:appendix:diagram}).
Traditionally, Hasse diagrams are constructed from the complete set and are not directed.
In contrast, \ac{RDHD} \ac{SUWR} policies start with the empty set and explicitly only traverse in the direction where elements are added.
Hence, we name it after a \emph{reversed} and \emph{directed} version of the Hasse diagram.
\end{definition}

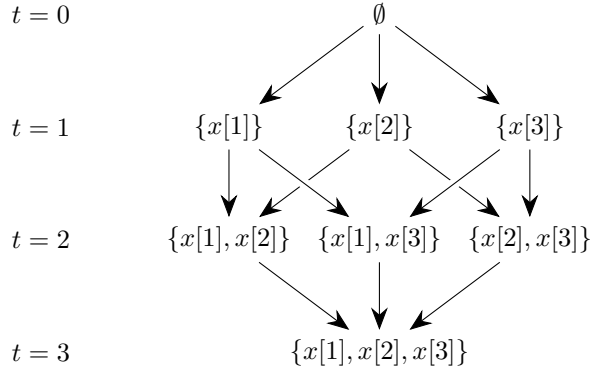
\begin{figure}[t]
\begin{center}
\begin{tikzpicture}
  \node (max) at (0,3) {$\emptyset$};
  \node (a) at (-2,1.5) {$\{x[1]\}$};
  \node (b) at (0,1.5) {$\{x[2]\}$};
  \node (c) at (2,1.5) {$\{x[3]\}$};
  \node (d) at (-2,0) {$\{x[1],x[2]\}$};
  \node (e) at (0,0) {$\{x[1],x[3]\}$};
  \node (f) at (2,0) {$\{x[2], x[3]\}$};
  \node (min) at (0,-1.5) {$\{x[1],x[2],x[3]\}$};
  \node (g) at (-4.5,3) {$t=0$};
  \node (h) at (-4.5,1.5) {$t=1$};
  \node (i) at (-4.5,0) {$t=2$};
  \node (j) at (-4.5,-1.5) {$t=3$};
  \draw [-{Stealth[length=3mm]}] (max) -- (a);
  \draw [-{Stealth[length=3mm]}] (max) -- (b);
  \draw [-{Stealth[length=3mm]}] (max) -- (c);
  \draw [-{Stealth[length=3mm]}] (a) -- (d);
   \draw [-{Stealth[length=3mm]}] (b) -- (d);
   \draw [-{Stealth[length=3mm]}] (b) -- (f);
   \draw [-{Stealth[length=3mm]}] (c) -- (f);
  \draw[preaction={draw=white, -,line width=4pt},-{Stealth[length=3mm]}] (a) -- (e);
  \draw[preaction={draw=white, -,line width=4pt},-{Stealth[length=3mm]}] (c) -- (e);
   \draw [-{Stealth[length=3mm]}] (d) -- (min);
   \draw [-{Stealth[length=3mm]}] (e) -- (min);
   \draw [-{Stealth[length=3mm]}] (f) -- (min);
\end{tikzpicture}
\end{center}
\caption{Visualization of all possible steps and transitions for a \ac{RDHD} SUWR policy when selecting from a set of three features.}
\label{fig:appendix:diagram}
\end{figure}

\begin{conjecture}
\label{theorem:appendix:onlymethod}
Under the assumption that the feature distribution supports the Cartesian product of possible feature values (Assumption~\ref{assumption:appendix:featureproduct}),
every feature selection policy $\zeta$ that has no leakage (Definition~\ref{def:leakage}) has an equivalent \ac{RDHD} \ac{SUWR} policy.
In other words, the set of all possible feature selection policies without leakage is a subset of the set of all possible \ac{RDHD} \ac{SUWR} policies.
\end{conjecture}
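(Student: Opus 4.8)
The plan is to recast the statement as a \emph{flow-feasibility} problem on the Boolean lattice and solve it by a recursion from the root $\emptyset$ downward, with Assumption~\ref{assumption:appendix:featureproduct} supplying feasibility at every level. By Corollary~\ref{corollary:mainresult}, a no-leakage policy depends on a mask only through the values of the selected features, so it is entirely encoded by numbers $\pi(s,v)\coloneqq\zeta(h_s\mid x[s]=v)\ge 0$ — one per subset $s\subseteq\{1,\ldots,d\}$ (with $h_s$ the mask selecting exactly $s$) and per value $v$ of $x[s]$ with $p(x[s]=v)>0$ — subject only to $\sum_{s}\pi(s,x[s])=1$ for every $x$ in the support. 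A \ac{RDHD} \ac{SUWR} policy, in turn, is a random walk on the Hasse diagram of $(2^{\{1,\ldots,d\}},\subseteq)$ started at $\emptyset$ that, at a visited set $s$, either stops or moves to some $s\cup\{i\}$, with per-step probabilities that are functions of $x[s]$ only. Writing $R(s,v)$ for the probability this walk ever reaches $s$ (a function of $v=x[s]$) and $F(s,v,i)$ for the flow it sends along $s\to s\cup\{i\}$, realizing $\zeta$ becomes exactly the task of producing nonnegative $R,F$ that (i) are functions of the feature values of their \emph{source} set only, (ii) are conserved: $R(\emptyset)=1$, $R(s',v')=\sum_{j\in s'}F(s'\setminus\{j\},v'|_{s'\setminus\{j\}},j)$, and $\sum_{i\notin s}F(s,v,i)=R(s,v)-\pi(s,v)$, and (iii) obey $R(s,v)\ge\pi(s,v)$ so the induced stop probability $\pi(s,v)/R(s,v)$ lies in $[0,1]$. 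A convenient observation is that, because $\sum_s\pi(s,x[s])=1$, condition (ii) makes the level sums $\sum_{|s|=t}R(s,x[s])$ telescope to $1-\sum_{|s|<t}\pi(s,x[s])$ and forces $R(\{1,\ldots,d\},\cdot)=\pi(\{1,\ldots,d\},\cdot)$ for free; so (ii) is automatic once one fixes a consistent recursion, and all the content sits in keeping (iii) satisfiable subject to (i).

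I would build $R$ and $F$ by induction on the level $t=|s|$, carrying the invariant ``$R(s,\cdot)\ge\pi(s,\cdot)$ for all admissible $v$''. At the root this reads $1\ge\pi(\emptyset)$, which holds; and here the \ac{SUWR} structure is already visible, since the step-$0$ selection is blind, so $R(\{i\},\cdot)$ must be a constant $\tau_i$ and the invariant at level $1$ demands $\tau_i\ge\max_v\pi(\{i\},v)$ and $\sum_i\tau_i=1-\pi(\emptyset)$ — feasible iff $\sum_i\max_v\pi(\{i\},v)\le 1-\pi(\emptyset)$, which is exactly where Assumption~\ref{assumption:appendix:featureproduct} enters: because the single-feature values range independently, $\sum_i\max_v\pi(\{i\},v)=\max_x\sum_i\pi(\{i\},x[i])\le 1-\pi(\emptyset)$. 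For the inductive step one must split the outflow $O(s,v)\coloneqq R(s,v)-\pi(s,v)\ge 0$ as $F(s,v,i)=\tau(s,v,i)\,O(s,v)$ over a probability vector $\tau(s,v,\cdot)$, producing $R(s',v')=\sum_{j\in s'}\tau(s'\setminus\{j\},v'|_{s'\setminus\{j\}},j)\,O(s'\setminus\{j\},v'|_{s'\setminus\{j\}})$, and choose the $\tau$'s so that $R(s',\cdot)\ge\pi(s',\cdot)$ again. This is the crux. The mechanism I would exploit is the decoupling already used in the Discussion of Section~\ref{sec:method} — under full support a maximum over unselected feature values becomes a constant in the selected ones — upgraded to the statement that full support, together with $\sum_s\pi(s,x[s])=1$, kills the ``interaction'' a function of $x[s']$ can have among distinct coordinate blocks, so that the worst case of $\pi(s',\cdot)$ splits block-wise and can be dominated by separately reserving, at each ancestor $s'\setminus\{j\}$, a sufficient share of its outflow. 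Concretely I expect the right $R(s,\cdot)$ to be ``$\pi(s,\cdot)$ plus a reserve'' with the reserve defined by a supremum over completions of $x[s]$ that, by full support, factors across the unselected coordinates; one then verifies both the telescoping identity and a Hall-type feasibility condition for the (highly connected) bipartite inclusion graph between consecutive lattice levels.

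The main obstacle is precisely this inductive step: the setup and the base case are routine, but proving that the domination $R(s,\cdot)\ge\pi(s,\cdot)$ can be \emph{maintained all the way down the lattice} — i.e.\ turning ``full support decouples the feature blocks'' into a quantitative invariant robust through all $d$ levels — is the technical heart, and is presumably why the authors state the result as a conjecture. As a complementary route worth keeping in reserve, note that the \ac{RDHD} \ac{SUWR}-realizable policies form a \emph{convex} set (given finitely many such policies, a blind categorical draw at $t=0$ picks one and one then follows it; since $t=0$ is blind, the posterior weight of the chosen sub-policy at any later set depends only on that set, so the mixture is again a \ac{RDHD} \ac{SUWR} policy), while the no-leakage policies form the polytope cut out by the linear equalities of Corollary~\ref{corollary:mainresult}; it would then suffice to realize its vertices, whose much sparser support should make the per-level feasibility check more tractable. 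Either way the argument bottlenecks at the same place.
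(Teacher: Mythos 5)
Your flow formulation on the Boolean lattice is essentially the paper's own supporting argument: your reach probability $R(s,v)$ is their $q(h \mid x\odot h)$, your stop ratio $\pi(s,v)/R(s,v)$ matches their construction of $\zeta_\text{stop}$, and your level-one feasibility condition $\sum_i\max_v\pi(\{i\},v)\le 1-\pi(\emptyset)$, obtained by decoupling the maxima under full support, is exactly their derivation of Eqs.~\ref{eq:appendix:selectmax}, \ref{eq:appendix:selectmin} and \ref{eq:appendix:probplusreq}. You also correctly isolate the same gap the paper leaves open---maintaining the domination $R(s,\cdot)\ge\pi(s,\cdot)$ through all levels of the lattice, i.e.\ their unproven requirement in Eq.~\ref{eq:finalreq}---which is precisely why the statement is a conjecture rather than a theorem.
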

\noindent
\emph{Support.}
We will provide reasons to support that, under Assumption~\ref{assumption:appendix:featureproduct}, for any $\zeta$ without leakage, there exists a $\zeta^t_\text{stop}$ and $\zeta^t_\text{select}$ for a \ac{RDHD} \ac{SUWR} policy, such that $\zeta$ and the \ac{RDHD} \ac{SUWR} policy have an identical distribution over feature selections.

For this section, the same $q$ function is used as for Theorem~\ref{appendix:methodproof}, but to keep our notation short, we will use $q(h \mid x \odot  h)$ instead of $q(t,  h[s^\text{in}] = 1, h[s^\text{ex}] = 0  \mid x[s^\text{in}], \zeta_\text{stop}, \zeta_\text{select})$.
We can do this without loss of specificity since each $h$ can only occur at a single specific step: $t = \sum_{i=1}^d h[i]$, we only consider $q$ in the context of $\zeta_\text{stop}$ \& $\zeta_\text{select}$, and we have already proven that $q$ only depends on the features selected by $h$, i.e., $x \odot h$ (see Theorem~\ref{appendix:methodproof}).
In other words, we use $q(h \mid x \odot  h)$ as the probability that the \ac{SUWR} process at some point \emph{considers} mask $h$, this is not the probability that $h$ is selected.

This difference reveals the requirement on the \ac{SUWR} policy, the probability of considering $h$ should be equal to or greater than the probability to select $h$:
\begin{equation}
\forall h, x, \qquad p(x) > 0 \longrightarrow q(h \mid x \odot  h) > \zeta(h \mid x \odot h).
\label{eq:appendix:qminimum}
\end{equation}
This requirement exists because the probability of selecting $h$ in a \ac{RDHD} \ac{SUWR} policy is equal to:
\begin{equation}
\forall h, x, \qquad p(x) > 0 \longrightarrow \zeta(h \mid x \odot h) = q(h \mid x \odot  h) \zeta_\text{stop}(h \mid x \odot h).
\end{equation}
Therefore, the probabilities $\zeta_\text{stop}(h \mid x \odot h)$ have to be:
\begin{equation}
\forall h, x, \qquad p(x) > 0 \longrightarrow
\zeta_\text{stop}(h \mid x \odot h) = \frac{
\zeta(h \mid x \odot h)
}{
q(h  \mid x \odot h, \zeta_\text{stop}, \zeta_\text{select}),
}
\end{equation}
this is a valid probability, i.e., $\zeta_\text{stop}(h \mid x \odot h) \in [0,1]$, if Eq.~\ref{eq:appendix:qminimum} is true, i.e., $q(h \mid x \odot  h) > \zeta(h \mid x \odot h)$.

Thus, we have to choose $\zeta_\text{select}$ such that Eq.~\ref{eq:appendix:qminimum} is guaranteed to hold.
To keep our notation short, we denote $\zeta^t_\text{select}( i \mid x \odot h)$ for the selection of feature $i$, i.e., the sampling of a vector $u$ such that only element $i$ is one: $u[i] = 1$ and all other values are zero: $i \not= j \leftrightarrow u[j] = 0$, conditioned on the feature values of $x$ selected by $h$.

Our key insight is that every time a \ac{RDHD} \ac{SUWR} policy samples a feature, it is excluding a set of possible selections, which can no longer be reached afterwards.
Instead of thinking about how the \ac{SUWR} process includes features into its selection, we consider the possible feature selections it excludes through the addition of each feature.
The following set covers all masks that can no longer be reached after $i$ is sampled by \ac{SUWR} to be added to mask $h$:
\begin{equation}
\operatorname{excluded}(h , i) = \{
h' : h[i]=0 \land \forall j \in \{1,2,\ldots, d\}, h[j] =1 \rightarrow h'[j] = 1
\}.
\end{equation}
As we can see, each mask in $\operatorname{excluded}(h , i)$ makes the same selections as $h$, in addition to every other possible selection, that does not select $i$ as well.
We note that when the set is empty when $i$ has already been selected in $h$: $h[i] = 1 \longrightarrow \operatorname{excluded}(h , i) = \emptyset$.
Therefore, the probability that feature $i$ is added to selection $h$ should not exceed the following:
\begin{equation}
\underbrace{
q(h  \mid x \odot h, \zeta_\text{stop}, \zeta_\text{select}) (1 - \zeta_\text{stop}( i \mid x \odot h))  \zeta_\text{select}( i \mid x \odot h)
}_\text{probability of reaching $h$ and adding $i$}
\leq 1 -
\underbrace{
\max_{\{x[j] : h[j] = 0 \land i \not= j\}} \sum_{h' \in \operatorname{excluded}(h, i)} \zeta( h' \mid x \odot h)
}_\text{max. prob. of selections no longer accessible afterwards}.
\end{equation}
This leads to the following restriction on $\zeta_\text{select}$:
\begin{equation}
\zeta_\text{select}( i \mid x \odot h)
\leq
\frac{
1 -
\max_{\{x[j] : h[j] = 0 \land i \not= j\}} \sum_{h' \in \operatorname{excluded}(h, i)} \zeta( h' \mid x \odot h)
}{
q(h  \mid x \odot h, \zeta_\text{stop}, \zeta_\text{select})  (1 - \zeta_\text{stop}( i \mid x \odot h))
}
\label{eq:appendix:selectmax}
\end{equation}
This maximum restriction ensures that these selections remain reachable by the \ac{RDHD} \ac{SUWR} policy with the required probability.
Thereby ensuring the requirement in Eq.~\ref{eq:appendix:qminimum} is true.
Importantly, this maximum can be inferred without knowledge of feature values that are not selected in $h$, thus it can be incorporated without introducing leakage.

However, not selecting feature $i$ also excludes a possible selection.
Namely, the selection that is made by only adding feature $i$ to $h$, as this can no longer be reached after the addition of a different feature.
We denote this mask as $h^{+i}$:
\begin{equation}
h^{+i} \in \{0,1\}^d \quad \text{s.t.} \quad  h[i]=1 \land \forall j \in \{1,2,\ldots, d\}, i \not= j \leftrightarrow h'[j] = h[j].
\end{equation}
Therefore, the probability of reaching $h$ and selecting $i$ must be at least as great as the maximal possible probability of $h^{+i}$ conditioned on the feature values selected so far:
\begin{equation}
\max_{x[i]} \zeta( h^{+i} \mid x \odot h^{+i})
\leq q(h  \mid x \odot h, \zeta_\text{stop}, \zeta_\text{select})(1 - \zeta_\text{stop}( i \mid x \odot h)) \zeta_\text{select}( i \mid x \odot h).
\end{equation}
This results in the following restriction on $\zeta_\text{select}$:
\begin{equation}
\frac{
\max_{x[i]} \zeta( h^{+i} \mid x \odot h^{+i})
}{
q(h  \mid x \odot h, \zeta_\text{stop}, \zeta_\text{select})(1 - \zeta_\text{stop}( i \mid x \odot h))
}
\leq
\zeta_\text{select}( i \mid x \odot h).
\label{eq:appendix:selectmin}
\end{equation}
Again, we note that this restriction can be enforced without introducing leakage as the minimum value only depends on feature values that are not selected in $h$.

By combining the requirement in Eq.~\ref{eq:appendix:selectmin} and Eq.~\ref{eq:appendix:selectmax}, we see that we need the following requirement to be true:
\begin{equation}
\max_{x[i]} \zeta( h^{+i} \mid x \odot h^{+i})
+
\max_{\{x[j] : h[j] = 0 \land i \not= j\}} \sum_{h' \in \operatorname{excluded}(h, i)} \zeta( h' \mid x \odot h)
\leq
1.
\label{eq:appendix:probplusreq}
\end{equation}
Since if Eq.~\ref{eq:appendix:probplusreq} is not true, there is no value of $\zeta_\text{select}( i \mid x \odot h)$ that can satisfy both Eq.~\ref{eq:appendix:selectmin} and Eq.~\ref{eq:appendix:selectmax}.

We will now show that under Assumption~\ref{assumption:appendix:featureproduct}, the requirement in  Eq.~\ref{eq:appendix:probplusreq} is always guaranteed.%
\footnote{For comparison, Table~\ref{tab:appendix:suwrimpossible} displays an example where Assumption~\ref{assumption:appendix:featureproduct} is not true, and accordingly, Conjecture~\ref{theorem:appendix:onlymethod} does not hold.}
To start, we use the following to denote the feature values that maximize each of the maximum operations:
\begin{equation}
\begin{split}
x[i]^* &= \arg\max_{x[i]} \zeta( h^{+i} \mid x \odot h^{+i}),
\\
\{x[j]^*\} &= \max_{\{x[j] : h[j] = 0 \land i \not= j\}} \sum_{h' \in \operatorname{excluded}(h, i)} \zeta( h' \mid x \odot h).
\end{split}
\end{equation}
Importantly, feature $i$ is not selected by any mask in the excluded set: $\forall h' \in \operatorname{excluded}(h, i), \; h[i] = 0$.
Therefore, there is no overlap between $x[i]^*$ and $\{x[j]^*\}$, this means that a possible vector of feature values exists that includes $x[i]^*$, $\{x[j]^*\}$ and $x \odot h$.
We denote this combination of possible values as:
\begin{equation}
\exists x^* : x^*[i] = x[i]^* \land \big( \forall x[j]^*,\, x^*[j] = x[j]^* \big)
\land x \odot h = x^* \odot h
.
\end{equation}
By the definition of $x^*$, $x[i]^*$ and $\{x[j]^*\}$, this vector maximizes both parts of the left side of Eq.~\ref{eq:appendix:probplusreq}:
\begin{equation}
\begin{split}
\zeta( h^{+i} \mid x^* \odot h^{+i})
&= \max_{x[i]} \zeta( h^{+i} \mid x \odot h^{+i}),
\\
\sum_{h' \in \operatorname{excluded}(h, i)} \zeta( h' \mid x^* \odot h)
&=
\max_{\{x[j] : h[j] = 0 \land i \not= j\}} \sum_{h' \in \operatorname{excluded}(h, i)} \zeta( h' \mid x \odot h)
.
\end{split}
\end{equation}
Assumption~\ref{assumption:appendix:featureproduct} states that every possible combination of feature values is supported by the feature distribution, therefore: $p(x^*) > 0$.
$\zeta$ is a valid probability distribution over all possible feature masks.
For every possible value of $x$, this means the sum of all probabilities of all masks cannot be greater than one.
Therefore, the same goes for this subset of masks:
\begin{equation}
    p(x^*) > 0 \longrightarrow \zeta( h^{+i} \mid x^* \odot h^{+i}) + \sum_{h' \in \operatorname{excluded}(h, i)} \zeta( h' \mid x^* \odot h) \leq 1
    .
\end{equation}
Consequently, the requirement in Eq.~\ref{eq:appendix:probplusreq} is guaranteed to hold under Assumption~\ref{assumption:appendix:featureproduct}, and therefore, there always exists a value for $\zeta_\text{select}( i \mid x \odot h)$ that can satisfy both Eq.~\ref{eq:appendix:selectmin} and Eq.~\ref{eq:appendix:selectmax}.

Unfortunately, this does not provide a complete proof, since there is an additional requirement that we were unable to prove.
Namely, Eq.~\ref{eq:appendix:selectmin} can only hold if the following is true:
\begin{equation}
\frac{
\max_{x[i]} \zeta( h^{+i} \mid x \odot h^{+i})
}{
q(h  \mid x \odot h, \zeta_\text{stop}, \zeta_\text{select})(1 - \zeta_\text{stop}( i \mid x \odot h))
}
\leq
1,
\end{equation}
since otherwise, Eq.~\ref{eq:appendix:selectmin} implies that $\zeta_\text{select}( i \mid x \odot h) > 1$ which would make it an invalid policy.
A simple reformulation reveals that this is a requirement on $q$:
\begin{equation}
q(h  \mid x \odot h, \zeta_\text{stop}, \zeta_\text{select})
\leq
\frac{
\max_{x[i]} \zeta( h^{+i} \mid x \odot h^{+i})
}{
1 - \zeta_\text{stop}( i \mid x \odot h)
}.
\label{eq:finalreq}
\end{equation}
In other words, the probability of $h$ being considered conditioned on $x \odot h$, $\zeta_\text{stop}$ and $\zeta_\text{select}$ needs to be great enough to provide enough probability mass for both the maximum possible probability of $h$ and $h^{+i}$.
For very small problems with two binary features, we are able to find a closed-form solution that gaurantees this.
Unfortunately, we were unable to extend this approach to a more generic setting.
Nonetheless, it appears that satisfying both Eq.~\ref{eq:appendix:selectmin} and Eq.~\ref{eq:appendix:selectmax} also guarantees Eq.~\ref{eq:finalreq}, but until this is proven our claim can only remain a conjecture.

\begin{table}[t]
\centering %
\caption{Example of a feature selection policy \emph{without} leakage that is \emph{impossible} to compute with \ac{SUWR}.
This happens because the feature distribution does not support the Cartesian product of possible feature values, as stated in Assumption~\ref{assumption:appendix:featureproduct}.
In this example, knowing that $x[1] = 1$ means one also knows $x[2]=0$ and $x[3] = 0$, therefore, these selections can be safely removed once $x[1] = 1$ is known, without introducing leakage.
Since \ac{SUWR} is agnostic to the underlying feature distribution, it does not use this property to enable the removal of features after their selection.
Consequently, the displayed policy cannot be executed through the \ac{SUWR} algorithm.
(See Table~\ref{tab:featleak} for an explanation of the notation).
}
\label{tab:appendix:suwrimpossible}

 \setlength{\tabcolsep}{5pt}
\begin{tabular}{c | c c c | c c c | c c c | c }
\toprule
$p(x, y, h )$ &
$x[1]$ & $x[2]$ & $x[3]$ &
$h[1]$ &  $h[2]$ & $h[2]$ &
$(x\odot h)$[1] & 
$(x \odot h)$[2] &  
$(x \odot h)$[3] &
y \\
\midrule
   $0.333\ldots$ & 1 & 0 & 0 & 1 & 0 & 0 & 1 &  &   & 2 \\
   $0.333\ldots$ & 0 & 1 & 0 & 0 & 1 & 0 &  & 1 &   & 1 \\
   $0.333\ldots$ & 0 & 0 & 1 & 0 & 0 & 1 &  &  & 1  & 0 \\
   \bottomrule
\end{tabular}

\end{table}

\section{Details on the Linear Programming Approach}
\label{appendix:linearprogram}

For our linear programming approach, we assume that the problem is fully known, thus complete knowledge of $p(x,y)$ is available.
In addition, we assume that the set of possible feature and label values is finite and iterable.
As a result, the optimal predictor $f^*$ can be treated as a lookup table that stores the optimal prediction per possible selected feature values.
For simplicity, we assume that the optimal prediction value is the expected label conditioned on the selected feature values:
\begin{equation}
f^*( x \odot h) = \mathbb{E}_x[y \mid x \odot h]
= \sum_{x' : x'\odot h = x \odot h} p(x') \sum_{y} p(y \mid x') y
= \sum_{x' : x'\odot h = x \odot h} p(x') \sum_{y} p(y \mid x') y.
\end{equation}
Therefore, we only have to find the optimal selector policy $\zeta$.
Our linear programming approach poses the search as a constrained minimization problem in the following form~\cite{dantzig1963linear, vanderbei2020linear}:
\begin{equation}
\min_\theta c^T\theta \quad \text{s.t.} \quad A\theta = b \; \land \; \mathbf{0} \leq \theta \leq \mathbf{1},
\label{eq:appendix:linprogloss}
\end{equation}
where $\theta$ is a vector where each element represents the conditional probability of a selection $\zeta(h \mid x)$.
The remainder of this section will show how the vectors $b$ and $c$ and matrix $A$ can be constructed so that this minimization problem is equivalent to selection policy optimization.

To start, we will show how $c$ and $\theta$ can be chosen so that $c^T\theta = \mathcal{L}(\zeta, f^*)$ (cf.\ Eq.~\ref{eq:genericloss}).
Importantly, we want our selection policy to have no leakage, as discussed in Section~\ref{sec:conditions} this means that:
\begin{equation}
\forall (x, x', h), \quad x \odot h = x' \odot h \longrightarrow \zeta(h \mid x) = \zeta(h \mid x').
\end{equation}
Therefore, we only have to find a single conditional probability $\zeta(h \mid x)$ for every unique $x \odot h$ value.
Thus, the size of vector $x$ is going to be the number of unique possible selected feature values, where each element corresponds to a single $x \odot h$ and contains the value for all corresponding $\zeta(h \mid x)$ values.
To see that our loss can be rewritten as a dot product with such a vector, we rewrite it as follows:
\begin{equation}
\begin{split}
    \mathcal{L}(\zeta, f^*)
    &=
    \sum_{x,y} p(x,y) \sum_{h} \big[ \zeta(x \odot h) L(f^*( x \odot h), y) + \lambda \lVert h \rVert \big]
    \\ &=
    \sum_{x \odot h} \zeta(h \mid x) \sum_{x' : x'\odot h = x \odot h} p(x') \Big( \lambda \lVert h \rVert + \sum_y p(y) L(f^*( x \odot h), y) \Big),
\end{split}
\end{equation}
where the summation $\sum_{x \odot h}$ sums over every possible value of $x \odot h$ once.
In other words, if multiple feature values result in the same selected feature values e.g., $x \odot h = x' \odot h$, only one of them is considered in the $\sum_{x \odot h}$ sum.
From the above reformulation, we see that for $c^T\theta = \mathcal{L}(\zeta, f^*)$ we require:
\begin{equation}
\forall (x, h), \, \exists! i \in \mathbb{N}_{>0}, \quad
\theta_i = \zeta(h \mid x) \land
c_i = \sum_{x' : x'\odot h = x \odot h} p(x') \Big( \lambda \lVert h \rVert + \sum_y p(y) L(f^*( x \odot h), y) \Big).
\label{eq:appendix:linprogcrequirement}
\end{equation}
Algorithm~\ref{alg:linprog} shows how we construct $c$ accordingly: first a mapping is made for every possible selected feature value ($x \odot h$) to an index on $c$, next the value of each element of $c$ is computed following  Eq.~\ref{eq:appendix:linprogcrequirement} and stored in the corresponding position.

Besides minimizing $\mathcal{L}$, it is important that the $\zeta$ is a valid probability distribution.
To be more precise, for all possible values of the full set of features $x$, $\zeta$ should produce a valid distribution over all possible selections ($\zeta(h \mid x)$).
We can express this formally in the following manner:
\begin{equation}
\forall x, \; p(x) > 0
\longrightarrow \Big(
\sum_{h \in \zeta(x)}
\zeta( h \mid x ) 
=
\sum_{s^\text{in}, s^\text{ex} : s^\text{in} \cup  s^\text{ex} =  \{1, 2, \mydots, d\}\hspace{-2cm} }
p( h[s^\text{in}] = 1, h[s^\text{ex}] = 0 \mid x[s^\text{in}], \zeta ) = 1
\Big).
\end{equation}
For the linear program, this requirement can be expressed through the $A$ matrix and $b$ vector in a straightforward manner.
We set $b = \mathbf{1}$ as a vector of ones with the size of the number of possible values for $x$.
The matrix $A$ gets a first dimension with the same size as $b$ and the second dimension the same size as $\theta$.
Thereby, each row corresponds to a possible value of $x$ and each column to a possible value of $x \odot h$.
Algorithm~\ref{alg:linprog} iterates over each row, representing a possible value of $x$, and then selects each column that corresponds to a possible set of masked features that could occur for $x$ and sets it to one.
As a result, $A\theta = b$ indicates that the probability distribution $\zeta(h \mid x)$ sums to one for each possible value of $x$.

Having constructed $A$, $b$ and $c$, we use SciPy to solve the linear programming problem of Eq.~\ref{eq:appendix:linprogloss}~\cite{2020SciPy-NMeth} and find the optimal value of $\theta$.
Correspondingly, the output of Algorithm~\ref{alg:linprog} is a lookup-table representing the optimal predictor $f^*$, the vector $\theta$ containing the optimal probabilities for $\zeta$, and an index that maps each $(x,h)$ to the element in $\theta$ that contains the corresponding $\zeta(h \mid x)$ value.
If the linear programming solver functions correctly, this solution represents the optimal predictor and selector policies possible for the task.
In our experimental analysis, we assume that the produced solutions are a close approximation of the optimal policies.

\begin{algorithm}[t]
\caption{Our linear programming approach.} 
\label{alg:linprog}
\begin{algorithmic}[1]
\STATE \textbf{Input}: Set of possible features: $X$, Set of possible labels: $Y$, Set of possible masks: $H$,
\\\phantom{\textbf{Input}:}
Probability distribution function: $p(x,y)$, Loss: $L$, Sparsity weight: $\lambda$.
\STATE $\text{feat\_index} \leftarrow \{\}$ \COMMENT{Empty dictionary to map possible masked feature values to indices.}
\STATE $\text{feat\_labels} \leftarrow \{\}$ \COMMENT{Empty dictionary to keep track of label values.}
\STATE $N_\text{unique} \leftarrow 0$ \COMMENT{Counter tracking number of possible unique selected feature values.}
\FOR{$x \in X, y \in Y : p(x,y) > 0$}
\FOR{$h \in H$}
    \IF{$x \odot h \not\in \text{feat\_index}$}
            \STATE $N_\text{unique} \leftarrow N_\text{unique} + 1$
            \STATE $\text{feat\_index}[x \odot h] = N_\text{unique}$ \COMMENT{If unique, the value $x \odot h$ receives the next available index.}
            \STATE $\text{feat\_labels}[x \odot h] = \emptyset$ \COMMENT{Initialize an empty set for every possible associated label value.}
    \ENDIF    
    \STATE $\text{feat\_labels}[x \odot h] \leftarrow \text{feat\_labels}[x \odot h] \cup \{(y, p(x, y))\}$	\COMMENT{Possible labels and cond. probabilities stored per $x\odot h$.}
\ENDFOR
\ENDFOR
\STATE $c \leftarrow \text{zero\_vector}(N_\text{unique})$ \COMMENT{Zero initialization of cost vector of size $N_\text{unique}$.}
\STATE $f^* \leftarrow \{\}$ \COMMENT{Empty dictionary to store optimal predictor.}
\FOR{$x \odot h \in  \text{feat\_index}$}
    \STATE $p(x \odot h) \leftarrow \sum_{(y, p(x, y)) \in \text{feat\_labels}[x \odot h]} p(x,y)$
    \COMMENT{Natural probability of the selected feature values.}
    \STATE $f^*(x \odot h) \leftarrow  \frac{1}{p(x \odot h)}\sum_{(y, p(x, y)) \in \text{feat\_labels}[x \odot h]} p(x,y) y$
    \COMMENT{Assumption: Optimal prediction is the expected value.}
    \STATE $i \leftarrow \text{feat\_index}[x \odot h]$
    \STATE $c[i] \leftarrow  \sum_{(y, p(x, y)) \in \text{feat\_labels}[x \odot h]} p(x,y) \big( L(f^*( x \odot h), y) + \lambda |h| \big)$
\ENDFOR
\STATE $A \leftarrow \text{zero\_matrix}(|X|,N_\text{unique})$ \COMMENT{Zero initialization of constraint matrix of size $|X| \times N_\text{unique}$.}
\STATE $i \leftarrow 0$
\FOR{$x \in X$}
\STATE $i \leftarrow i + 1$
\FOR{$h \in H$}
    \STATE $j \leftarrow \text{feat\_index}[x \odot h]$
    \STATE $A[i, j] \leftarrow 1$
    \COMMENT{Setting ones for every possible selected feature values per row for each $x$.}
\ENDFOR
\ENDFOR
\STATE $b \leftarrow \text{one\_vector}(|X|)$
\COMMENT{Vector of size $|X|$ (number of possible feature values) filled with ones.}
\STATE$\theta \leftarrow \text{Linear\_Program\_Solver}(A, b, c)$  \COMMENT{Solves Eq.~\ref{eq:appendix:linprogloss}, outputs vector of size $N_\text{unique}$ with ordering matching feat\_index.}
\STATE \textbf{Return}: $f^*, \theta, \text{feat\_index}$ %
\end{algorithmic}
\end{algorithm}

\section{Experimental Details}
\label{appendix:experimental-details}

In this section, we describe the experimental details including datasets, implementation, hyper-parameters and additional results.
Our experimental implementation is available here: \url{https://github.com/GarfieldLyu/SUWR}.
For all baselines, we adapted the original source code and when necessary, modified it to fit our experimental objectives. We use the following links for baseline implementation:
\begin{itemize}
    \item L2X: \url{https://github.com/Jianbo-Lab/L2X}
    \item INVASE: \url{https://github.com/jsyoon0823/INVASE}
    \item TabNet: \url{https://github.com/dreamquark-ai/tabnet}
    \item REAL-X: \url{https://github.com/rajesh-lab/realx}
    \item CAE: \url{https://github.com/mfbalin/Concrete-Autoencoders} 
\end{itemize}

All methods are built on neural networks. Among all, L2X, INVASE and REAL-X have independent selector and predictor models. The selector is constructed by feed-forward (FF) layers and outputs a selection probability for each feature. The predictor has a similar architecture but outputs the task-specific prediction, using the selected input by masking out the unselected features. CAE is slightly different, as it uses a single trainable $d\times k$ matrix as the global selector, the matrix values are used as the selection probabilities. The predictor is an FF network, which transforms the selected features (so the input dimension reduces to $k$) and outputs the prediction. TabNet unlike the others, has a single architecture for both selection and prediction. The selection is conducted step-wisely by a neural selection component and the final prediction is generated by ensembling the outputs from all steps.

Our method is flexible in architecture design. We choose to employ a simple model with FF networks to generate selection ($u^t$), prediction ($\hat{y}^{t}$) and stop probability ($p_{\text{stop}}^t$) simultaneously at the step $t$, defined as follows: 
\begin{equation}
    enc = \text{FF}_{\text{enc}}(x\odot h^t), \quad p_{\text{stop}}^t = \text{FF}_{\text{stop}}(enc), \quad u^{t} = \text{FF}_{\text{select}}(enc), \quad \hat{y}^{t} = \text{FF}_{\text{pred}}(enc)
\end{equation} 
$\text{FF}_{\text{enc}}$ is used to encode the input to a hidden representation and across all experiments, we set it to $3$ layers. $\text{FF}_{\text{stop}}$ and $\text{FF}_{\text{select}}$ are both set to $1$ layer. 
$\text{FF}_{\text{pred}}$ is set to $1$ layer for toy and synthetic datasets, and $2$ layers for MNIST datasets. The selection for next step $h^{t+1}$ is sampled from $u^{t}$, and to avoid repeated selection, the probabilities of selected features in corresponding $u^{t}$ are set to $0$ before sampling.

For sparsity-related hyper-parameters, both L2X and CAE require a pre-specified $k$ value as the number of selected features; the rest of methods determine the number of selections by a sparsity weight $\lambda$. Additionally, TabNet also requires a number of steps $n_{steps}$, except for $\lambda$.
For our method, we need to specify a maximum selection budget (or step) $T$, and a sparsity weight $\lambda$ to control the number of selections.
We experimented with a range of values for these hyper-parameters, which we report in the following corresponding subsections.

\subsection{Toy Dataset}
\label{appendix:toy-dataset}
This dataset contains the input of 10-dimensional binary features,  and thus results in $1024 = 2 ^ {10}$ instances. All methods are trained and evaluated on all $1024$ instances. 
For our method, the reported results come from the FF networks with $64$ hidden units, $T=10$, and $\lambda$ in \{0.3, 0.4, 0.5, 0.8\}.
For L2X, we set both selector and predictor as a 3-layer FF model with 64 hidden units, and $k$ from 1 to 10.
For INVASE, we use the same selector and predictor architecture as L2X, and vary $\lambda$ in \{7.0, 8.0, 10.0, 11.0\}.
For TabNet, we vary $n_{steps}$ in \{1, 2, 3\} and $\lambda$ in \{0.002, 0.003, 0.004, 0.005\}.
Lastly for REAL-X, we have to modify the original objective from Cross-Entropy to MSE loss, and vary $\lambda$ in \{1.5, 2.0, 3.0, 5.0\}, the model architecture remains the same as L2X. 
For all methods, we train the model for maximum 2000 epochs, and use early stopping with a patience of 1000 epochs.

\subsection{Synthetic Dataset}
\label{appendix:synthetic-dataset}
For all synthetic datasets (Syn1 -- Syn6), we generate 10,000 training samples with a random seed 0, and 10,000 test samples with a random seed 100. All methods are learned on the training dataset and evaluated on the test dataset, the reported results in Table \ref{tab:syn-results} are averaged over 5 tries.

The selector and predictor model for L2X, INVASE and REAL-X are both 3-layer FF networks with $200$ hidden units. For L2X, we set the $k$ for Syn1 to Syn6 as \{1, 4, 4, 5, 5, 5\}, respectively. For INVASE, we choose $\lambda = 0.1 $ for Syn1 to Syn3, $\lambda = 0.2$ for Syn4 and Syn6, and $\lambda = 0.15 $ for Syn5. For REAL-X, we run the model by varying $\lambda$ across \{ 0.05, 0.1, 0.15, 0.2\} and choose 0.05 for Syn4 and Syn5, 0.1 for Syn6, and 0.15 for Syn1, Syn2 and Syn3.  For TabNet, we chose the recommended hyper-parameters reported in the original paper. In detail, the $n_{steps}$ is set as 4 for Syn1 -- Syn3 and 5 for Syn4 -- Syn6, the $\lambda$ is 0.02 for Syn1, 0.01 for Syn2 and Syn3, and 0.005 for Syn4 -- Syn6. The rest of hyper-parameters in TabNet remain the same as the default setup. 
Our method uses FF networks with $100$ hidden units. For Syn1, we report the results with $T = 4$ and $\lambda = 0.01$. For Syn2 and Syn3, the $T$ is set as 4 and $\lambda$ as 0. For Syn4 and Syn5, we choose $T$ as 5 and $\lambda$ as 0.005. Lastly for Syn6, $T$ is 5 and $\lambda$ is 0.  

We also provide additional results in Figure~\ref{fig:selection_performance} to show the advantages of our method. Firstly, as shown in the left figure, our method is able to select the control-flow feature ($\mathbf{x}_{11}$) at the very first step, as its value determines the upcoming relevant features. We observed that for the other step-wise method TabNet, $\mathbf{x}_{11}$ is usually selected in a later step. Our method in this regard, provides a more interpretable reasoning logic for the selection decision. Furthermore, as the right figure shows, our method has the flexibility to allow us to either explicitly specify a selection budget without sparsity penalty, or figure out the right number of features by tuning a sparsity weight within a maximum selection budget, so that the model can squeeze out irrelevant features and converge to the optimal selection within the budget window.

\begin{figure*}[h]
    \centering
        \includegraphics[width=0.48\textwidth]{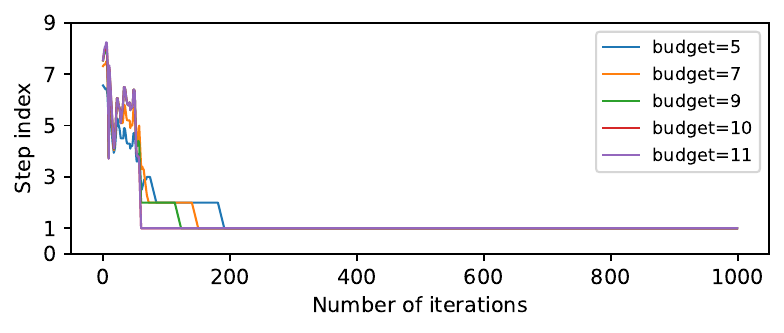}
    \hfill
        \includegraphics[width=0.48\textwidth]{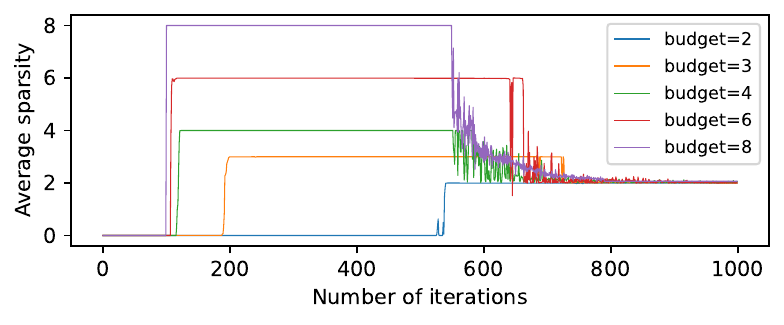}
    \caption{
    \textbf{Left}: at what step $\mathbf{x}_{11}$ is selected for Syn6.
    \textbf{Right}: selection budget vs.\ sparsity for Syn1, where only two features are relevant. }
    \label{fig:selection_performance}
    \end{figure*}

\subsection{MNIST Datasets}
\label{appendix:mnist-datasets}
We follow the benchmark splits for both digits-MNIST and fashion-MNIST. All methods are trained on the 60,000 training samples and tested on the 10,000 test sets. 
The reported results are averaged over 3 tries.

For the predictor without feature selection, we use a 3-layer FF network with 200 hidden units. We choose the same architecture for both selector and predictor for REAL-X. Additionally, we vary the $\lambda$ across $\{1.5, 3.0, 5.0, 6.0, 8.0, 10.0, 12.0, 13.0, 14.0, 18.0, 20.0\}$ to produce the performance curves in figure~\ref{fig:mnist-results}.
For CAE, we run the original code with supervised learning setup. The predictor for CAE is a 3-layer FF with 320 hidden units. We vary $k$ in \{15, 20, 30, 40, 50\} for selection by pixels. For the patch selection, we first obtain the top-k important features learned by CAE, and then train the predictor with the $3\times 3$ patches of features around the selected ones. 
For our method we choose 200 as the FF hidden unit. The maximum selection step $T$ is set to $50$, and the sparsity weight $\lambda$ is varied across $\{0.05, 0.1, 0.15, 0.2, 0.3\}$. 

Additionally, we also include some patch-selection examples from digits-MNIST datasets. Figure~\ref{fig:digit-mnist-steps} again shows the benefits of early stopping in reducing selection while maintaining performance. On the left side, we plot the average number of actual selections (i.e., the average sparsity) can be much smaller than the maximum selection budget $T$, under the same prediction performance. The right side gives a concrete image example of the digit 0. After 4 steps, the model (1) can correctly predict the digit with high confidence; and (2) is recommended to stop here by the stop probability. Continuing the selection will not affect the prediction performance.
Another example in Figure~\ref{fig:mnist_3} shows the process of predicting an image of ``3" with step-wisely selecting patches. The first three patches are enough to distinguish the image from the rest of classes except for ``8", and the fourth path however, shows high discriminative information of ``3" or ``8". This is also supported by the minor perturbation of pixels in the fourth patch. When the fourth patch is not blank anymore, the prediction is flipped from ``3" to ``8".
This example shows a strong example of how the SUWR can explain the contribution of each feature to the prediction, which here gives much more insight than if one would highlight all selected features at once.

    \vspace{-0.5\baselineskip}

\begin{figure*}[hb]
    \begin{center}
        \includegraphics[width=0.45\columnwidth]{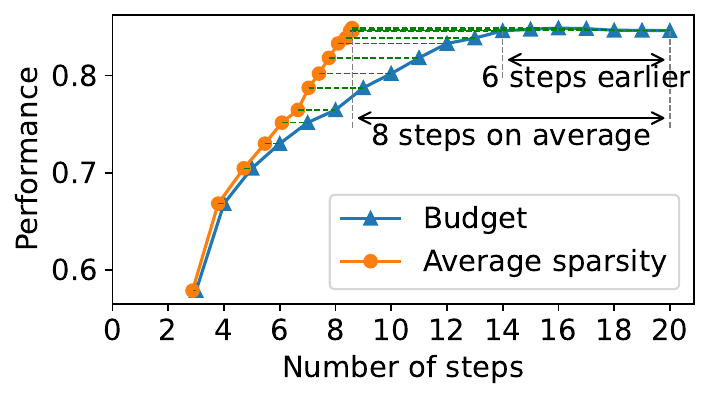}
        \hfill
        \includegraphics[width=0.45\columnwidth]{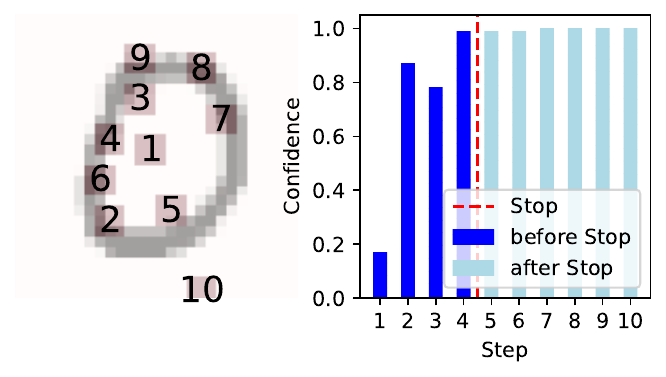}
    \end{center}
    \vspace{-1.5\baselineskip}
    \caption{Selecting by patch on digits-MNIST. Early stopping before maximum step budget $T$.}
    \label{fig:digit-mnist-steps}
    \end{figure*}

    \begin{figure*}[hb]
    \begin{center}
        \includegraphics[width=0.95\textwidth]{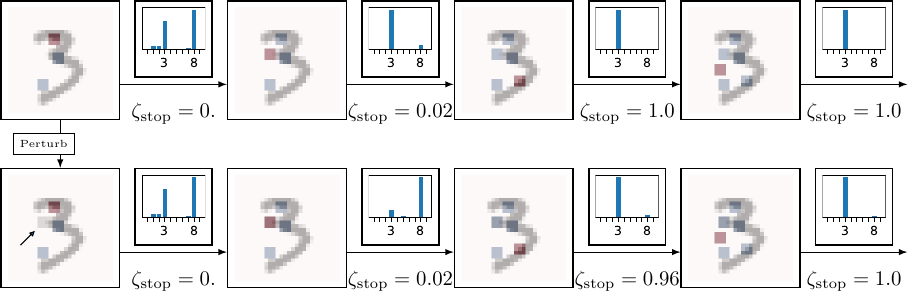}
    \vspace{-0.5\baselineskip}
        \caption{Digit 3 or 8? Each image (indicates one step) is masked by the colored squares and accompanied by a prediction bar chart and a stop probability $\zeta_{\text{stop}}$. The red square is the new selection at the current step and the blue ones indicate the previous selections. The first two steps are omitted. The second row shows the same image as the top row, except for one particular patch which is filled with gray pixels, highlighted by the arrow.
        Due to this change, the prediction on the second row is flipped in the fourth step (second image from the left).}
        \label{fig:mnist_3}
    \end{center}
\end{figure*}

\end{document}